\PassOptionsToPackage{table}{xcolor}
\documentclass[sigconf]{aamas}
\makeatletter
\def\aamas@legacy@editors{\fi}
\ifx\acmConference@editors\aamas@legacy@editors
  \expandafter\@firstofone
\else
  \expandafter\@gobble
\fi
{\acmConference@editors\gdef\acmConference@editors{}}
\let\aamas@legacy@editors\@undefined
\makeatother

\usepackage[T1]{fontenc}
\usepackage[utf8]{inputenc}
\usepackage{microtype}
\usepackage{graphicx}
\usepackage{booktabs}
\usepackage{amsmath,amsthm}
\usepackage{mathtools}
\usepackage{bm}
\usepackage[table]{xcolor}
\usepackage{tabularx}
\newcommand{\cmark}{\ding{51}}
\newcommand{\xmark}{\ding{55}}
\usepackage{array}
\usepackage{tabularx}
\usepackage{enumitem}
\usepackage{cleveref}
\usepackage{multirow}
\usepackage{xcolor}
\numberwithin{equation}{section}

\AtEndPreamble{%
  \hypersetup{
    pdfauthor={},
    colorlinks=true,
    linkcolor=blue,
    citecolor=black,
    urlcolor=blue,
    pdfborder={0 0 0},
    pdfborderstyle={}
  }%
}

\newsavebox{\ManuscriptLinkBox}
\DeclareRobustCommand{\ManuscriptLinkFrame}[2]{%
  \begingroup
    \setlength{\fboxsep}{0.3pt}%
    \setlength{\fboxrule}{0.6pt}%
    \sbox{\ManuscriptLinkBox}{#2}%
    \leavevmode
    \rlap{%
      \kern-\dimexpr\fboxsep+\fboxrule\relax
      \smash{\textcolor{#1}{\fbox{\phantom{\usebox{\ManuscriptLinkBox}}}}}%
    }%
    \usebox{\ManuscriptLinkBox}%
  \endgroup
}
\AddToHook{begindocument/end}{%
  \let\ManuscriptOriginalCiteNumFont\citenumfont
  \renewcommand*{\citenumfont}[1]{%
    \ManuscriptLinkFrame{green}{\ManuscriptOriginalCiteNumFont{#1}}%
  }%
  \let\ManuscriptOriginalEqref\eqref
  \renewcommand*{\eqref}[1]{%
    \begingroup
      \hypersetup{linkcolor=black}%
      \ManuscriptLinkFrame{red}{\ManuscriptOriginalEqref{#1}}%
    \endgroup
  }%
}

\AtBeginEnvironment{figure*}{%
  \ifcase\value{figure}
    \Description{Six-stage framework: local modules identify objects, propose event matches, and match intervention responses. An aggregator validates interfaces, composes local mechanisms for downstream prediction, and optionally guides local actions with execution feedback.}%
  \or
    \Description{Causal composition routes outgoing events through validated interfaces into destination mechanisms. The optional local controller checks prerequisites, predicts effects, gates actions, and compares observed and predicted outcomes, with revision and execution feedback.}%
  \fi
}

\makeatletter
\gdef\@copyrightpermission{
  \begin{minipage}{0.2\columnwidth}
   \href{https://creativecommons.org/licenses/by/4.0/}{\includegraphics[width=0.90\textwidth]{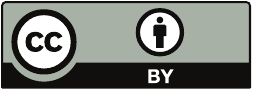}}
  \end{minipage}\hfill
  \begin{minipage}{0.8\columnwidth}
   \href{https://creativecommons.org/licenses/by/4.0/}{This work is licensed under a Creative Commons Attribution International 4.0 License.}
  \end{minipage}
  
}
\makeatother
\setcopyright{ifaamas}
\acmConference[AAMAS 2027]{Proc.\@ of the 26th International Conference
on Autonomous Agents and Multiagent Systems (AAMAS 2027)}{3--7 May 2027}{Hanoi, Vietnam}{M.~Baldoni, F.~Fang, W.~Yeoh, N.~Yorke-Smith (eds.)}
\copyrightyear{2027}
\acmYear{2027}
\acmDOI{}
\acmISBN{}
\acmSubmissionID{}
\submissionType{Research Paper Track}

\newtheorem{theorem}{Theorem}

\newcommand{\fcc}{\textnormal{\textsc{CausalCompose}}}
\newcommand{\E}{\mathbb{E}}
\newcommand{\cM}{\mathcal{M}}

\newcommand{\Pobs}{P_{\mathrm{obs}}}
\newcommand{\Pint}{P_{\mathrm{int}}}

\newcommand{\Fobs}{\hat{F}^{\mathrm{obs}}}

\newcommand{\seqwm}{\textsc{Seq-WM}}
\newcommand{\discwm}{\textsc{Discovered-Causal}}
\newcommand{\oraclewm}{\textsc{Oracle-Causal}}

\crefname{assumption}{Assumption}{Assumptions}
\Crefname{assumption}{Assumption}{Assumptions}
\crefrangeformat{assumption}{Assumptions~#3#1#4--#5#2#6}

\title{When Do Causal World Models Help Modular LLM Agents}
\author{Xinyuan Song}
\affiliation{
  \institution{Emory University}
  \city{Atlanta}
  \country{USA}
}
\email{xinyuan.song@emory.edu}
\author{Zekun Cai}
\affiliation{
  \institution{The University of Tokyo}
  \city{Tokyo}
  \country{Japan}
}
\email{caizekun@csis.u-tokyo.ac.jp}

\begin{abstract}
LLM agents increasingly operate in modular systems, where planning requires coordinating multiple functional components whose actions can causally change downstream states and behaviors. This makes intervention-aware prediction essential: an agent must anticipate not only what is likely to happen next, but what will happen as a consequence of its own actions. We find that purely observational world models can incur irreducible intervention-time error in such systems, even when they accurately model observational trajectories. To address this limitation, we propose \fcc, a causal world-model framework that equips modular LLM agents with causal structure for intervention-aware prediction and planning. We theoretically show that causal composition can overcome this intervention gap with sufficient evidence and controlled local error. Experiments across diverse modular agent environments validate these results: incorporating causal structure consistently improves intervention-sensitive reasoning and planning, with particularly clear gains in structured tool-use systems where causal dependencies directly govern action execution. Our results show when causal world models help modular LLM agents: when causal dependencies are identifiable and actionable for planning.
\end{abstract}

\ccsdesc[500]{Computing methodologies~Causal reasoning and diagnostics}
\ccsdesc[300]{Computing methodologies~Multi-agent systems}
\keywords{Causal world models, modular LLM agents, causal composition, intervention-aware planning}

\begin{document}
\emergencystretch 3em
\maketitle

\section{Introduction}

LLM agents increasingly operate in systems that are modular by design~\cite{liao2023apibank,lu2024toolsandbox,yao2024taubench,barres2025tau2bench}. Rather than interacting with a single monolithic environment, an agent often coordinates multiple functional modules~\cite{liao2023apibank,lu2024toolsandbox}. For example, a retail agent must coordinate payment and fulfillment services to complete an order~\cite{yao2022webshop,yao2024taubench,barres2025tau2bench}, while a household agent must coordinate navigation and object interaction to accomplish a physical task~\cite{shridhar2020alfworld}. Modern agent benchmarks make this modular structure increasingly explicit: APIBank~\cite{liao2023apibank} organizes interaction around executable APIs, ToolSandbox~\cite{lu2024toolsandbox} evaluates stateful tool use, $\tau$-bench~\cite{yao2024taubench} and $\tau^2$-bench~\cite{barres2025tau2bench} couple language interaction with domain tools and evolving environment state, and ALFWorld~\cite{shridhar2020alfworld} requires agents to coordinate distinct functional operations over a shared embodied task. This modular organization enables agents to decompose complex tasks into specialized components, reuse functional capabilities across tasks, and coordinate heterogeneous tools and services within a unified workflow. A modular design therefore provides a scalable and flexible foundation for building agents that operate across increasingly complex environments.

This modularity creates a causal problem. Sequence-based world models learn which states and actions tend to occur together, whereas planning requires predicting what will happen because the agent takes a particular action. This distinction becomes fundamental when modules share latent causes or when several components change together: an observationally predictive dependency need not remain valid after intervention \citep{pearl2009causality,peters2016causal,scholkopf2021causalrep}. For example, an agent may repeatedly observe a payment update followed by a shipment transition and learn this sequence accurately, while still predicting the wrong downstream state when it actively changes the payment module. More generally, model-based RL and language-agent world models are commonly trained to predict future observations or trajectories from recorded experience \citep{janner2019mbpo,hafner2018planet,hafner2019dreamer,hafner2020dreamerv2,hafner2023dreamerv3,wang2024llmworldsim,rwom2025,dynathink2025,text2world2025}. In modular systems, however, accurate observational prediction alone does not guarantee correct intervention-time prediction.

\begin{figure*}[!ht]
  \centering
  \includegraphics[width=0.75\linewidth]{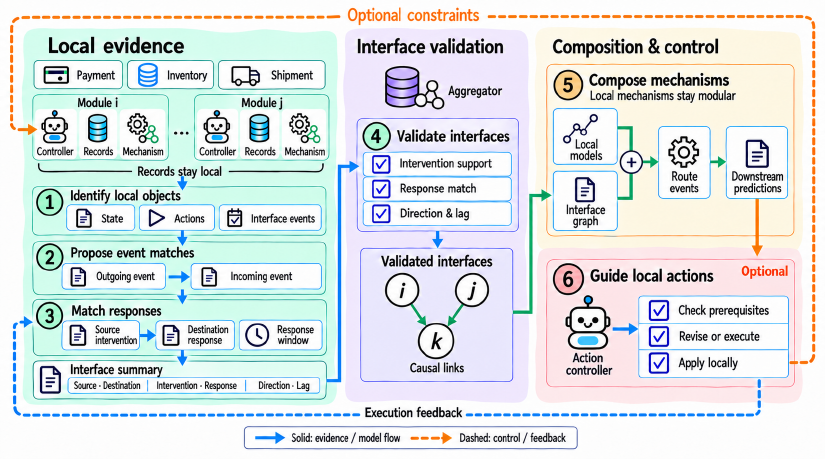}
    
\caption{\textbf{Overview of \fcc.}
\textbf{(a) Local evidence:} each module preserves local records and mechanisms while exposing compact intervention-response summaries.
\textbf{(b) Interface validation:} aggregated evidence validates causal dependencies, including direction and lag, to construct the causal interface graph.
\textbf{(c) Composition and control:} the validated graph is composed with local mechanisms for downstream prediction and action guidance, with execution feedback supporting subsequent updates.}
  \label{fig:pipeline}
  
\end{figure*}

We first formalize why observational world models can fail at intervention time. Even if a world model perfectly learns the predictive relationships present in observational data, those relationships need not correctly predict the consequences of actively changing one module~\cite{pearl2009causality,peters2016causal}. We characterize this mismatch through unblocked back-door paths~\cite{pearl2009causality} and prove that, under nondegenerate confounding, the optimal observational predictor still incurs a strictly positive error gap relative to the correct interventional predictor. Thus, collecting more observational data or fitting the observational distribution more accurately cannot in general eliminate the error: resolving it requires information that distinguishes observation from intervention~\cite{pearl2009causality,peters2016causal}. We further show that this problem becomes more consequential in modular systems, because an incorrect intervention prediction at one module can propagate through subsequent module dependencies, causing prediction reliability to deteriorate as the composition depth increases.

To address this problem, we propose \fcc, a causal world model for modular LLM agents. Instead of treating the system as one undifferentiated trajectory, \fcc preserves local mechanisms and uses local actions and responses to identify causal dependencies among functional modules. As illustrated in Figure~\ref{fig:pipeline}, \fcc first extracts intervention-response evidence locally, validates the resulting causal dependencies, and then composes the validated structure with local mechanisms for downstream prediction and optional action guidance. This composition turns observational modular dynamics into an intervention-aware world model for planning. We theoretically show that causal recovery improves with intervention-response coverage and that, with sufficient coverage and controlled local mechanism error, causal composition can achieve lower intervention-time error than an observational world model.

We evaluate these predictions through synthetic causal diagnostics and modular LLM-agent environments~\cite{shridhar2020alfworld,liao2023apibank,lu2024toolsandbox,yao2024taubench,barres2025tau2bench}. The synthetic experiments verify the predicted intervention gap, horizon compounding, and dependence of causal recovery on intervention coverage. Agent experiments further show that incorporating causal structure improves intervention-sensitive reasoning and planning, with the clearest gains in structured tool-use environments~\cite{liao2023apibank,lu2024toolsandbox,yao2024taubench,barres2025tau2bench} where modular dependencies directly govern valid actions and downstream transitions. Across dialogue and narrative settings, the experiments further demonstrate that causal knowledge becomes useful when it is presented in a form that makes the relevant dependency actionable at decision time. Together, these results answer the question posed by our title: causal world models help modular LLM agents when intervention-sensitive dependencies can be identified from available evidence and incorporated into agent planning.

Our contributions are summarized as follows:
\begin{enumerate}[leftmargin=1.5em, itemsep=1pt, topsep=2pt, parsep=0pt, partopsep=0pt]
    \item We identify a fundamental mismatch between observational world modeling and intervention-time planning in modular LLM agents.
    \item We establish the intervention-time error of observational models, characterize its compounding across modules, and derive conditions under which causal composition overcomes this gap.
    \item We propose \fcc, which learns causal dependencies from intervention-response evidence and composes local mechanisms for prediction and planning.
    \item We validate our theory across synthetic diagnostics and modular agent environments, showing when causal world models improve agent planning.
\end{enumerate}

\section{Related Work}

\paragraph{Federated and causal learning.}
Federated RL~\citep{momentum_fedrl_2024} shares policies, value functions, or learned models across decentralized clients, while model-based RL learns dynamics for planning and control~\citep{janner2019mbpo,hafner2018planet,hafner2019dreamer,hafner2020dreamerv2,hafner2023dreamerv3}. Causal discovery methods learn structure from observational data~\citep{zheng2018notears}, interventions or heterogeneous environments~\citep{peters2016causal,brouillard2020dcdi,perry2022causaldiscoveryheterogeneousenvironments}, and distributed data~\citep{mian2023nothing,baldo2026regretbasedfederatedcausaldiscovery,fedci2026}. Unlike settings where clients represent separate environments, \fcc\ treats clients as functional modules of one coupled system and uses intervention--response evidence to recover and compose dependencies between their local mechanisms.

\paragraph{Causal and language-agent world models.}
Causal representation learning and causal world models study how causal structure supports generalization, prediction, and control~\citep{scholkopf2021causalrep,xrl_causal_wm_2023,causal_mbrl_provably_2022}. Recent LLM-agent work further explores environment simulation, retrieval, dynamic reasoning, adaptive world models, state tracking, and action-conditioned reasoning~\citep{wang2024llmworldsim,rwom2025,dynathink2025,text2world2025,patchworld2026,llm_state_2023,yao2023react}. We instead ask when causal dependencies between functional modules improve intervention-time prediction and planning.

\paragraph{Agent benchmarks and modular systems.}
Interactive benchmarks evaluate agents over tool use, service workflows, embodied environments, web interaction, and device control~\citep{yao2024taubench,barres2025tau2bench,liao2023apibank,lu2024toolsandbox,yao2022webshop,shridhar2020alfworld,wang2022scienceworld,rawles2024androidworld}. We use their naturally separable functional components to make cross-module dependencies explicit and evaluate when modeling these dependencies causally improves prediction and planning.

\paragraph{Relationship to prior work.}
Table~\ref{tab:related-work-novelty} compares \fcc\ with closely related work in causal world modeling, modular structure, intervention-based evidence, cross-module composition, planning-time use, and theoretical characterization of when causal modeling improves prediction.

\begin{table*}[!ht]
\centering
\caption{\textbf{Comparison with closely related work.}
We compare methods by whether they support causal world modeling, modular
structure, intervention-based evidence, cross-module causal composition,
planning-time use, and theoretical characterization.}
\label{tab:related-work-novelty}

\small
\setlength{\tabcolsep}{2.5pt}
\renewcommand{\arraystretch}{1.12}

\resizebox{0.8\textwidth}{!}{%
\begin{tabular}{
@{}
>{\raggedright\arraybackslash}p{0.30\textwidth}
>{\centering\arraybackslash}p{0.10\textwidth}
>{\centering\arraybackslash}p{0.10\textwidth}
>{\centering\arraybackslash}p{0.12\textwidth}
>{\centering\arraybackslash}p{0.13\textwidth}
>{\centering\arraybackslash}p{0.10\textwidth}
>{\centering\arraybackslash}p{0.10\textwidth}
@{}
}
\toprule
\textbf{Work}
& \textbf{Causal WM}
& \textbf{Modular}
& \textbf{Intervention evidence}
& \textbf{Cross-module composition}
& \textbf{Planning use}
& \textbf{Theory}
\\
\midrule

Gkountouras et al.~\citep{gkountouras2025language}
& \cmark & \xmark & \cmark & \xmark & \cmark & \xmark \\

CausalPlan~\citep{nguyen2025causalplan}
& \cmark & \xmark & \cmark & \xmark & \cmark & \xmark \\

Dyna-Think~\citep{dynathink2025}
& \xmark & \cmark & \xmark & \cmark & \cmark & \xmark \\

PatchWorld~\citep{patchworld2026}
& \xmark & \cmark & \xmark & \cmark & \cmark & \xmark \\

\midrule

WM3C~\citep{wang2025modeling}
& \cmark & \cmark & \cmark & \cmark & \cmark & \xmark \\

PoE-World~\citep{piriyakulkij2025poeworld}
& \xmark & \cmark & \xmark & \cmark & \cmark & \xmark \\

C-MBPO~\citep{causal_mbpo_2025}
& \cmark & \cmark & \cmark & \cmark & \cmark & \cmark \\

\midrule

I-PERI~\citep{baldo2026regretbasedfederatedcausaldiscovery}
& \cmark & \cmark & \cmark & \xmark & \xmark & \cmark \\

fedCI-IOD~\citep{fedci2026}
& \cmark & \cmark & \xmark & \xmark & \xmark & \cmark \\

\midrule
\rowcolor{gray!15}
\textbf{\fcc\ (This work)}
& \cmark & \cmark & \cmark & \cmark & \cmark & \cmark \\
\bottomrule

\end{tabular}%
}

\end{table*}

\section{Why Non-Causal World Models Fail}
\label{sec:noncausal}

Non-causal world models can fail because predicting what follows an action is not equivalent to predicting its causal effect. In modular agents, hidden factors may influence both action selection and downstream outcomes, causing observational models to learn incorrect interventional dynamics despite accurate observational prediction. We formalize this mismatch and its resulting rollout errors below.

\paragraph{Modular causal world model.}
Consider a modular agent with $K$ interacting modules. Let $X_{k,t}$ denote the state of module $k$ at time $t$, and let $A_{k,t}$ denote the action executed by that module. We model the local transition as
\begin{equation}
X_{k,t+1}=F_k^*(X_{k,t},A_{k,t},X_{\mathcal{P}_k,t},U_{k,t}),
\label{eq:body-local}
\end{equation}
where $\mathcal{P}_k$ denotes the set of modules with direct causal influence on module $k$, and $U_{k,t}$ collects unobserved factors. These dependencies define a directed causal graph
\begin{equation}
G^*=(\cM,E^*),
\label{eq:body-causal-graph}
\end{equation}
where $(i,k)\in E^*$ indicates that module $i$ has a direct causal influence on module $k$. Equation~\eqref{eq:body-causal-graph} therefore connects the local mechanisms in Equation~\eqref{eq:body-local} and specifies how causal effects can propagate across modules.

This representation makes the limitation of a non-causal world model explicit. Given the observed system state $X_t=X_{1:K,t}$, an observational model learns transitions of the form
\begin{equation}
\Pobs(X_{j,t+1}\mid X_t,A_{k,t}=a),
\label{eq:body-observational}
\end{equation}
whereas an agent that actively executes action $a$ requires the interventional transition
\begin{equation}
\Pint(X_{j,t+1}\mid X_t,\do(A_{k,t}=a)).
\label{eq:body-interventional}
\end{equation}
Although Equations~\eqref{eq:body-observational} and~\eqref{eq:body-interventional} appear similar, they can differ when unobserved factors jointly affect action selection and downstream outcomes. In particular, suppose an unobserved variable $U$ influences both the action $A_{k,t}$ and the downstream state $X_{j,t+1}$:
\begin{equation}
A_{k,t}\leftarrow U\rightarrow X_{j,t+1}.
\label{eq:body-confounding}
\end{equation}
Because $U$ is not included in the observed system state $X_t$, conditioning on $X_t$ and $A_{k,t}$ can mix the effect of the action with the influence of $U$. As a result, the observational transition in Equation~\eqref{eq:body-observational} does not in general recover the interventional transition in Equation~\eqref{eq:body-interventional}, and for some actions they may differ:
\begin{equation}
\Pobs(X_{j,t+1}\mid X_t,A_{k,t}=a)\neq\Pint(X_{j,t+1}\mid X_t,\do(A_{k,t}=a)).
\label{eq:body-causal-gap}
\end{equation}
Equation~\eqref{eq:body-causal-gap} shows why observational accuracy is insufficient for an acting agent. A causal world model instead predicts the interventional transition in Equation~\eqref{eq:body-interventional} by propagating the effect of $\do(A_{k,t}=a)$ through the causal graph in Equation~\eqref{eq:body-causal-graph}.

\paragraph{Interventional prediction gap.}
The distinction in Equation~\eqref{eq:body-causal-gap} can be made explicit in the linear-Gaussian Structural Causal Model (SCM). Suppose an unobserved variable enters the downstream mechanism of module $j$ through $V_j$ with strength $\gamma$, and intervening on $A_{k,t}$ changes its conditional mean by $\mu_{\mathrm{shift}}$. The resulting displacement between the observational and interventional predictions is
\begin{equation}
\Delta_{\mathrm{int}}=\gamma V_j\mu_{\mathrm{shift}}.
\label{eq:body-linear-gap}
\end{equation}
Equation~\eqref{eq:body-linear-gap} is nonzero whenever $\gamma\neq 0$ and $V_j\mu_{\mathrm{shift}}\neq 0$, providing a concrete instance of the observational--interventional mismatch in Equation~\eqref{eq:body-causal-gap}.

\begin{theorem}[Interventional risk gap]
\label{thm:body-risk}
Let $\Fobs$ be the population-optimal squared-error predictor under the observational transition in Equation~\eqref{eq:body-observational}. Under the intervention $\do(A_{k,t}=a)$, define its interventional risk as
\begin{equation}
R_{\mathrm{int}}(\Fobs):=\E_{\Pint}\left[\left\|X_{j,t+1}-\Fobs(X_t,a)\right\|_2^2\right].
\label{eq:body-int-risk}
\end{equation}
Then, this risk satisfies
\begin{equation}
R_{\mathrm{int}}(\Fobs)\ge\E_{\Pint}\left[\left\|\E_{\Pint}[X_{j,t+1}\mid X_t,\do(A_{k,t}=a)]-\Fobs(X_t,a)\right\|_2^2\right].
\label{eq:body-risk-gap}
\end{equation}
For the linear-Gaussian setting in Equation~\eqref{eq:body-linear-gap}, Equation~\eqref{eq:body-risk-gap} further gives
\begin{equation}
R_{\mathrm{int}}(\Fobs)\ge\gamma^2\left\|V_j\mu_{\mathrm{shift}}\right\|_2^2.
\label{eq:body-linear-risk}
\end{equation}
\end{theorem}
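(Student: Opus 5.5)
The plan is a bias--variance (Pythagorean) decomposition of the interventional risk in \eqref{eq:body-int-risk}. Write $m_{\mathrm{int}}(X_t):=\E_{\Pint}[X_{j,t+1}\mid X_t,\do(A_{k,t}=a)]$. I would add and subtract this quantity inside the norm:
\[
X_{j,t+1}-\Fobs(X_t,a)=\bigl(X_{j,t+1}-m_{\mathrm{int}}(X_t)\bigr)+\bigl(m_{\mathrm{int}}(X_t)-\Fobs(X_t,a)\bigr).
\]
Expanding the squared norm gives three terms: the squared residual, the squared displacement, and a cross term.

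To handle the cross term, I condition on $X_t$ under $\Pint$. Both $m_{\mathrm{int}}(X_t)$ and $\Fobs(X_t,a)$ are deterministic given $X_t$, and the action is fixed at $a$ by the intervention. The fact that $\Fobs$ was fitted on observational data does not matter here, since it is just a fixed measurable function of $(X_t,a)$. By the tower property, the conditional expectation of $X_{j,t+1}-m_{\mathrm{int}}(X_t)$ given $X_t$ is zero, so the cross term vanishes. Hence
\[
R_{\mathrm{int}}(\Fobs)=\E_{\Pint}\bigl[\|X_{j,t+1}-m_{\mathrm{int}}(X_t)\|_2^2\bigr]+\E_{\Pint}\bigl[\|m_{\mathrm{int}}(X_t)-\Fobs(X_t,a)\|_2^2\bigr].
\]
Dropping the first term, which is a nonnegative irreducible variance, yields \eqref{eq:body-risk-gap}. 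I assume finite second moments so that every expectation is well defined.

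For the linear-Gaussian bound \eqref{eq:body-linear-risk}, I need to identify the integrand. The population-optimal squared-error predictor under $\Pobs$ is the observational conditional mean, $\Fobs(X_t,a)=\E_{\Pobs}[X_{j,t+1}\mid X_t,A_{k,t}=a]$. In the linear SCM, the downstream mechanism is $X_{j,t+1}=(\text{linear in }X_t,a)+\gamma V_j U+\text{noise}$. The two conditional means therefore share every structural term and differ only through the confounder's contribution. Under observation, conditioning on $A_{k,t}=a$ shifts the conditional mean of $U$. Under $\do(A_{k,t}=a)$, the arrow from $U$ into $A_{k,t}$ is cut, so $U$ keeps its marginal. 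The difference between the two predictions is exactly the displacement $\Delta_{\mathrm{int}}=\gamma V_j\mu_{\mathrm{shift}}$ of \eqref{eq:body-linear-gap}, read as given by the setup, with sign conventions absorbed into $\mu_{\mathrm{shift}}$. Substituting gives
\[
\|m_{\mathrm{int}}(X_t)-\Fobs(X_t,a)\|_2^2=\gamma^2\|V_j\mu_{\mathrm{shift}}\|_2^2.
\]

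The main obstacle I anticipate is in this last step. Equation~\eqref{eq:body-linear-gap} treats the displacement as a constant, but in general the shift in $\E[U\mid X_t,A_{k,t}=a]$ could depend on $X_t$. In the jointly Gaussian model this conditional mean is affine in $(X_t,a)$. I would take $\mu_{\mathrm{shift}}$ to be the resulting shift at the given $a$. If it turns out to depend on $X_t$, the bound still holds with $\|V_j\mu_{\mathrm{shift}}\|_2^2$ replaced by its $\Pint$-expectation, and that expectation is exactly what \eqref{eq:body-risk-gap} delivers. Taking the expectation of a constant then returns \eqref{eq:body-linear-risk}, and the bound is strictly positive whenever $\gamma\neq0$ and $V_j\mu_{\mathrm{shift}}\neq0$.
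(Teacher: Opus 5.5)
Your proposal is correct and matches the paper's proof step for step. The paper also adds and subtracts the interventional conditional mean $F_{\mathrm{causal}}(X_t,a)$, kills the cross term by conditioning on $X_t$ under $\Pint$, drops the nonnegative residual term, and then substitutes the constant displacement $\gamma V_j\mu_{\mathrm{shift}}$ from Equation~\eqref{eq:body-linear-gap}. Your caveat about a possibly $X_t$-dependent shift, which would give the $\Pint$-expectation of $\|V_j\mu_{\mathrm{shift}}\|_2^2$, is a sensible refinement that the paper does not address, because it treats the displacement as a constant.
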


Theorem~\ref{thm:body-risk} shows that observational accuracy alone cannot remove the intervention-time error induced by Equation~\eqref{eq:body-causal-gap}. A causal world model instead targets the interventional transition in Equation~\eqref{eq:body-interventional} and uses the causal graph in Equation~\eqref{eq:body-causal-graph} to propagate the effect of an action to downstream modules. The complete proof of Theorem~\ref{thm:body-risk} is provided in Appendix~\ref{app:proof-risk-gap}.

\paragraph{Why the causal model can win.}
In a modular system, an observational--interventional mismatch can propagate through multiple causal paths. For the linear-Gaussian setting, let $B_e$ denote the causal operator associated with edge $e$, and let $\Pi(k,j)$ denote the set of directed paths from the intervened module $k$ to a downstream module $j$. The total causal effect is
\begin{equation}
T_{j\leftarrow k}:=\sum_{\pi\in\Pi(k,j)}\prod_{e\in\pi}B_e.
\label{eq:body-total-effect}
\end{equation}
If the intervention at module $k$ induces the mean displacement $\gamma V_k\mu_{\mathrm{shift}}$, this discrepancy propagates to module $j$ as
\begin{equation}
b_j=T_{j\leftarrow k}\gamma V_k\mu_{\mathrm{shift}}.
\label{eq:body-propagated-bias}
\end{equation}
Thus, while the observational predictor propagates a systematically biased state through $G^*$, the causal predictor propagates the interventional conditional mean.

\begin{theorem}[Causal advantage under propagation]
\label{thm:body-causal-advantage}
For any downstream module $j$, the intervention-time excess risk of the population-optimal observational predictor satisfies
\begin{equation}
\begin{aligned}
R_{\mathrm{int}}(\Fobs;j)-R_{\mathrm{int}}(F_{\mathrm{causal}};j)
&=\|b_j\|_2^2\\
&=\gamma^2\left\|T_{j\leftarrow k}V_k\mu_{\mathrm{shift}}\right\|_2^2.
\end{aligned}
\label{eq:body-propagated-risk}
\end{equation}
Consequently,
\begin{equation}
\begin{aligned}
\gamma^2\sigma_{\min}^2(T_{j\leftarrow k})
\left\|V_k\mu_{\mathrm{shift}}\right\|_2^2
&\le R_{\mathrm{int}}(\Fobs;j)-R_{\mathrm{int}}(F_{\mathrm{causal}};j)\\
&\le\gamma^2\|T_{j\leftarrow k}\|_2^2
\left\|V_k\mu_{\mathrm{shift}}\right\|_2^2.
\end{aligned}
\label{eq:body-propagated-bounds}
\end{equation}
\end{theorem}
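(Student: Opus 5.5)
The plan is a bias--variance decomposition of the interventional squared-error risk at module $j$, followed by the linear propagation identity in Equation~\eqref{eq:body-propagated-bias}. Fix $X_t$ and $\do(A_{k,t}=a)$. Write $m_j(X_t):=\E_{\Pint}[X_{j,t+1}\mid X_t,\do(A_{k,t}=a)]$ for the interventional conditional mean at module $j$, and let $F_{\mathrm{causal}}(X_t,a)=m_j(X_t)$ be the causal predictor that propagates this mean through $G^*$. For any predictor $F$, I would add and subtract $m_j$ inside the norm. The cross term vanishes by the tower property, because $F(X_t,a)-m_j(X_t)$ is measurable with respect to the conditioning variables. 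This gives
\begin{equation*}
R_{\mathrm{int}}(F;j)=\E_{\Pint}\!\left[\|X_{j,t+1}-m_j\|_2^2\right]+\E_{\Pint}\!\left[\|m_j-F(X_t,a)\|_2^2\right],
\end{equation*}
which is the same decomposition that yields Theorem~\ref{thm:body-risk}. Applying it to $F=\Fobs$ and to $F=F_{\mathrm{causal}}$ and subtracting, the irreducible noise term cancels. The excess risk is therefore exactly $\E_{\Pint}\|m_j-\Fobs(X_t,a)\|_2^2$.

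Next I would identify $m_j-\Fobs(X_t,a)$ with $-b_j$. In the linear-Gaussian SCM the population-optimal observational predictor is the observational conditional mean. By Equation~\eqref{eq:body-linear-gap}, at the intervened module this mean differs from the interventional mean by the constant displacement $\gamma V_k\mu_{\mathrm{shift}}$. The model is linear, so both predictors push their module-$k$ means through the same structural operators $B_e$. Unrolling the structural equations along the DAG, the discrepancy at $j$ is the sum over directed paths of the products of edge operators applied to that displacement, i.e.\ $T_{j\leftarrow k}\gamma V_k\mu_{\mathrm{shift}}=b_j$ by Equations~\eqref{eq:body-total-effect}--\eqref{eq:body-propagated-bias}. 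I would prove this by induction over a topological order of the modules between $k$ and $j$: each node's mean gap is $\sum_{i\in\mathcal{P}}B_{(i,\cdot)}$ applied to the parents' gaps. Since $b_j$ is deterministic, the expectation of $\|b_j\|_2^2$ is just $\|b_j\|_2^2$, and pulling out the scalar $\gamma$ gives the second equality in Equation~\eqref{eq:body-propagated-risk}.

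The bounds in Equation~\eqref{eq:body-propagated-bounds} then follow from the standard inequalities $\sigma_{\min}(T)\|v\|_2\le\|Tv\|_2\le\|T\|_2\|v\|_2$ with $v=V_k\mu_{\mathrm{shift}}$, squared and multiplied by $\gamma^2$. If $T_{j\leftarrow k}$ is not square, $\sigma_{\min}$ should be read as the smallest singular value on its row space, or the lower bound stated only when $T$ has full column rank.

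The main obstacle is the second step. One must justify that the observational bias really is a constant displacement entering only at module $k$ and is then transported purely by the causal operators. This requires that the confounder $U$ does not open additional back-door paths directly into downstream modules, and that $\Fobs$ uses the true $B_e$ downstream rather than re-regressing on biased states. I would make this explicit as the modeling assumption implicit in Equation~\eqref{eq:body-propagated-bias}, namely that the confounded discrepancy is localized at $k$ and linear propagation is exact, and state the theorem under it.
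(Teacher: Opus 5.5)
Your proposal is correct and follows the paper's proof essentially step for step: the orthogonal decomposition from Theorem~\ref{thm:body-risk} reduces the excess risk to $\|b_j\|_2^2$, and the propagation identity in Equation~\eqref{eq:body-propagated-bias} gives the closed form. The paper simply asserts that identity, whereas you justify it by topological induction and explicitly flag the localized-confounding assumption it needs; the singular-value sandwich then gives the bounds. One small correction to your side remark: reading $\sigma_{\min}$ as the smallest nonzero singular value only works when $V_k\mu_{\mathrm{shift}}$ lies in the row space of $T_{j\leftarrow k}$, so the safe convention is $\sigma_{\min}(T):=\min_{\|z\|_2=1}\|Tz\|_2$ (or your full-column-rank restriction), under which the lower bound holds for every $T$.
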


Theorem~\ref{thm:body-causal-advantage} shows that causal advantage depends on both the initial confounding bias and its propagation through the causal graph. While $\gamma V_k\mu_{\mathrm{shift}}$ determines the local bias, $T_{j\leftarrow k}$ aggregates all directed paths to the downstream module, allowing the bias to be amplified, attenuated, or canceled across paths. Thus, identical local confounding can produce different downstream errors under different causal structures. The proof of Theorem~\ref{thm:body-causal-advantage} is provided in Appendix~\ref{app:proof-causal-advantage}.

\begin{figure*}[!ht]
  \centering
  \includegraphics[width=0.75\linewidth]{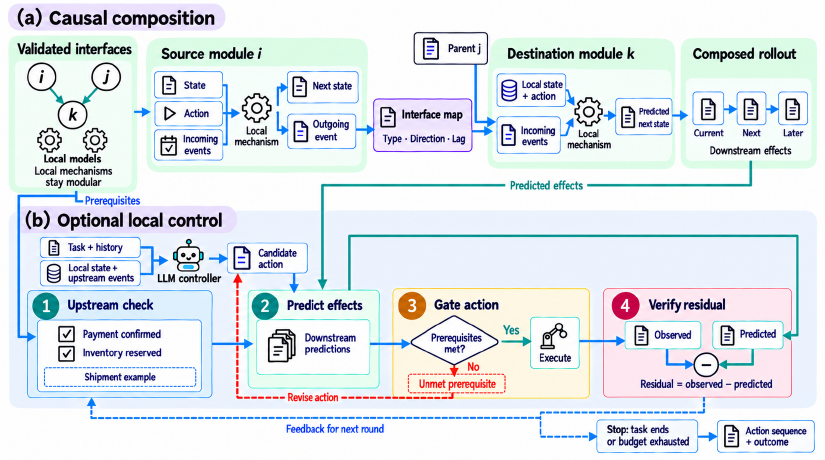}
    
  \caption{\textbf{Causal composition and local control in \fcc.}
  \textbf{(a)} Validated interfaces connect local mechanisms to propagate intervention effects and predict downstream states.
  \textbf{(b)} The composed model can optionally guide local actions through prerequisite checks, effect prediction, action gating, and post-action verification.}
  \label{fig:composition_control}
    
\end{figure*}

%
%

\section{CausalCompose}
\label{sec:method}

\fcc\ turns distributed interaction traces into an \\intervention-aware world model for modular LLM agents. Rather than fitting a single global transition model, it preserves local mechanisms and learns the causal interfaces needed to compose them. Given $K$ modules with local states $X_{k,t}$, actions $A_{k,t}$, and transition mechanisms defined in Equation~\eqref{eq:body-local}, \fcc\ identifies and validates cross-module intervention--response dependencies, then composes them for downstream prediction and action selection.

Figure~\ref{fig:pipeline} summarizes the pipeline: extracting local intervention--response evidence, validating causal interfaces, and composing them for prediction and optional control. Modules exchange only compact interface evidence---including events, responses, direction, and lag---rather than complete trajectories, preserving modularity while capturing cross-module causal propagation.

\subsection{Local Intervention--Response Evidence}
\label{sec:local-evidence}

The first stage converts local execution traces into candidate causal evidence. As illustrated in Figure~\ref{fig:pipeline}, each module records its local state, executed actions, resulting state changes, and externally visible interface events. Following the local dynamics in Equation~\eqref{eq:body-local}, \fcc\ treats an executed action $A_{i,t}=a$ as intervention-like evidence and records the corresponding change in $X_{i,t}$ together with outgoing events that may affect other modules.

Instead of sharing complete trajectories, modules exchange compact records of interventions, interface events, downstream responses, and timing information. \fcc\ matches outgoing events from source modules with compatible incoming events and state changes in destination modules, providing empirical evidence for the interventional dependencies defined by $P_{\mathrm{int}}(\cdot\mid\do(A_{i,t}=a))$ in Equation~\eqref{eq:body-interventional}.

Because temporal matches alone do not establish causality, \fcc\ checks repeated executions for consistent intervention--response patterns, including response direction and lag. The resulting interface summaries identify candidate cross-module dependencies and provide the evidence used for causal validation in the next stage.

\subsection{Causal Interface Validation}
\label{sec:interface-validation}

Candidate interfaces may arise from incidental temporal co-occurrence and therefore must be validated before they are used for planning. Figure~\ref{fig:pipeline}(b) shows this aggregation step. For each proposed edge $(i,j)$, \fcc\ collects evidence across repeated intervention opportunities and evaluates three properties: whether the source intervention is sufficiently supported, whether a compatible destination response repeatedly follows it, and whether the direction and lag are consistent with the proposed causal interface.

We represent the result of verification trial $n$ by
\begin{equation}
M_{ij}^{(n)}\sim\mathrm{Bernoulli}(p_{ij}),
\qquad
p_{ij}=q_i r_{ij},
\label{eq:fcc-match}
\end{equation}
where $q_i$ is the probability that the relevant intervention on module $i$ is realized and $r_{ij}$ is the probability that its downstream response at module $j$ is detected. Given $N_{ij}$ verification opportunities, the interface is retained when
\begin{equation}
(i,j)\in\widehat E
\quad\text{if}\quad
\sum_{n=1}^{N_{ij}}M_{ij}^{(n)}\ge1.
\label{eq:fcc-edge-rule}
\end{equation}
The recovered parent set is therefore
\begin{equation}
\widehat{\mathcal P}_j
=
\{i:(i,j)\in\widehat E\},
\end{equation}
and the collection of validated interfaces forms the recovered causal graph
\begin{equation}
\widehat G=(\mathcal M,\widehat E).
\label{eq:fcc-recovered-graph}
\end{equation}

This design separates \emph{local mechanism learning} from \emph{cross-module structure learning}. The former determines how each module responds locally, while the latter determines which mechanisms must be connected when predicting the consequence of an intervention. Consequently, a new interface can be added or removed without retraining an undifferentiated global world model.

For each true edge, we assume conditionally independent verification trials and uniformly positive intervention and detection probabilities,
\begin{equation}
q_i\ge q_{\min}>0,\qquad
r_{ij}\ge r_{\min}>0,\qquad
(i,j)\in E^*.
\label{eq:fcc-min-prob}
\end{equation}
We further define
\begin{equation}
N_{\min}:=
\min_{(i,j)\in E^*}N_{ij}.
\label{eq:fcc-nmin}
\end{equation}

\begin{theorem}[Coverage of true causal edges]
\label{thm:body-coverage}
Suppose the verification model in Equation~\eqref{eq:fcc-match} holds, and the trials for each true edge are conditionally independent. Under the recovery rule in Equation~\eqref{eq:fcc-edge-rule} and the positive coverage conditions in Equation~\eqref{eq:fcc-min-prob},
\begin{equation}
\Pr(E^*\nsubseteq\widehat E)
\le
|E^*|\exp(-N_{\min}q_{\min}r_{\min}).
\label{eq:fcc-coverage}
\end{equation}
Consequently, for any $\delta\in(0,1)$,
\begin{equation}
N_{\min}
\ge
\frac{\log(|E^*|/\delta)}
{q_{\min}r_{\min}}
\label{eq:fcc-coverage-sufficient}
\end{equation}
guarantees
$\Pr(E^*\subseteq\widehat E)\ge1-\delta$.
\end{theorem}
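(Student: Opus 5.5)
The plan is a union bound over the true edges combined with an elementary exponential tail bound for each edge.

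Fix a true edge $(i,j)\in E^*$. Under the recovery rule in Equation~\eqref{eq:fcc-edge-rule}, the edge is missed exactly when every verification trial fails, i.e.\ $M_{ij}^{(n)}=0$ for all $n=1,\dots,N_{ij}$. The trials are conditionally independent and follow the Bernoulli model in Equation~\eqref{eq:fcc-match}, so
\begin{equation*}
\Pr\bigl((i,j)\notin\widehat E\bigr)=(1-p_{ij})^{N_{ij}}=(1-q_ir_{ij})^{N_{ij}}.
\end{equation*}
By Equation~\eqref{eq:fcc-min-prob}, $p_{ij}\ge q_{\min}r_{\min}$. By Equation~\eqref{eq:fcc-nmin}, $N_{ij}\ge N_{\min}$. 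Since $1-p_{ij}\in[0,1)$, the probability is monotone decreasing in both $p_{ij}$ and $N_{ij}$. Hence it is at most $(1-q_{\min}r_{\min})^{N_{\min}}$. Applying the inequality $1-x\le e^{-x}$ for $x\in[0,1]$ then gives the per-edge bound $\exp(-N_{\min}q_{\min}r_{\min})$.

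The event $E^*\nsubseteq\widehat E$ is the union over $(i,j)\in E^*$ of the events $(i,j)\notin\widehat E$. A union bound over these at most $|E^*|$ events yields Equation~\eqref{eq:fcc-coverage}. The union bound needs no independence across edges; only independence within each edge's trials is used.

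For the sufficient condition in Equation~\eqref{eq:fcc-coverage-sufficient}, we require $|E^*|\exp(-N_{\min}q_{\min}r_{\min})\le\delta$. Taking logarithms, this is equivalent to $N_{\min}\ge\log(|E^*|/\delta)/(q_{\min}r_{\min})$. Complementing the event then gives $\Pr(E^*\subseteq\widehat E)\ge1-\delta$.

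The only delicate point is the interpretation of the modelling assumptions. We need the $N_{ij}$ trials for a given edge to be i.i.d.\ with a common success probability $p_{ij}$, or at least independent with each success probability bounded below by $q_{\min}r_{\min}$. In the latter case the product $\prod_n(1-p^{(n)}_{ij})$ is still bounded by the same expression. We also take $N_{ij}$ to be deterministic, or else condition on it. Beyond this, the argument is routine.
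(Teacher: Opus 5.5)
Your proposal is correct and follows essentially the same route as the paper. Both arguments get the per-edge miss probability $(1-q_ir_{ij})^{N_{ij}}$ from within-edge independence, bound it by $\exp(-N_{\min}q_{\min}r_{\min})$, take a union bound over $E^*$, and substitute directly for the sample-size condition; the only cosmetic difference is that you apply monotonicity before $1-x\le e^{-x}$, whereas the paper applies the exponential bound first and then lower-bounds the exponent.
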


Theorem~\ref{thm:body-coverage} shows that true-edge coverage improves exponentially with the verification budget. The rate depends on both intervention exposure and response detectability: causal interfaces cannot be reliably recovered if the relevant intervention is never executed or its downstream response cannot be observed. This directly motivates the intervention--response collection stage in Figure~\ref{fig:pipeline}.

\subsection{Intervention-Aware Causal Composition}
\label{sec:causal-composition}

Once the interfaces are validated, \fcc\ composes the local mechanisms through the recovered causal structure. As illustrated in Figure~\ref{fig:composition_control}(a), a candidate intervention is first evaluated by its local mechanism, and the resulting interface event is propagated only to causally connected modules in $\widehat G$. Each destination module then combines the incoming event with its local state and mechanism to predict subsequent states and downstream effects.

Rather than fitting a global transition function, \fcc\ recursively composes these local mechanisms along validated causal paths. In the linear setting, this propagation is characterized by the total-effect operator in Equation~\eqref{eq:body-total-effect}, which aggregates the effects transmitted along causal paths between modules. Thus, validated interfaces determine where intervention effects propagate, while local mechanisms determine how those effects transform.

The resulting rollout captures both immediate and downstream consequences of a candidate intervention. Unlike \seqwm, which predicts from observational trajectories, \fcc\ explicitly propagates interventions through the recovered causal structure.

\subsection{Optional Causal Control for Action Selection}
\label{sec:causal-control}

Beyond prediction, \fcc\ can use the composed causal model as a closed-loop controller during action selection. As shown in Figure~\ref{fig:composition_control}(b), at each step the LLM first proposes a candidate action. \fcc\ then checks its upstream prerequisites using $\widehat{\mathcal P}_k$, predicts its local and downstream consequences through the causal composition in Equation~\eqref{eq:body-total-effect}, and gates the action accordingly. Actions with missing prerequisites or conflicting predicted effects are revised; otherwise, the selected action is executed.

Execution closes the loop. After observing the resulting downstream state, \fcc\ compares it with the causal prediction. For predicted state $\widehat X_{j,t+1}$, the residual $e_{j,t+1}=X_{j,t+1}-\widehat X_{j,t+1}$ measures the prediction--execution mismatch. The observed state and residual are then fed into the next decision step, updating the evidence available for interface verification and informing the next candidate action. The control loop therefore repeats as
\emph{propose $\rightarrow$ check $\rightarrow$ predict $\rightarrow$ gate $\rightarrow$ execute $\rightarrow$ verify $\rightarrow$ feedback}
until the task terminates.

This gives \fcc\ two operational modes: \emph{prediction-only}, where causal composition does not alter the policy, and \emph{control}, where causal predictions and execution feedback repeatedly guide subsequent decisions. This distinction separates the value of recovering causal structure from the agent's ability to exploit it online.

\subsection{Error of causal composition.}
Graph recovery alone is not sufficient: the causal mechanisms used for prediction must also be accurate. Suppose each local causal mechanism has squared prediction error at most $\varepsilon$. Because local errors propagate through the same causal paths as intervention effects, their downstream contribution at module $j$ is controlled by the total-effect operators from Equation~\eqref{eq:body-total-effect}. In particular, if the relevant upstream modules are $\mathcal{P}_j$, the composition error is bounded by
\begin{equation}
R_{\mathrm{int}}(\widehat F_{\mathrm{causal}};j)-R_{\mathrm{int}}(F_{\mathrm{causal}};j)\le\varepsilon\left(\sum_{v\in\mathcal{P}_j}\|T_{j\leftarrow v}\|_2\right)^2
\label{eq:fcc-composition-error}
\end{equation}
when all required causal edges are recovered.

Theorem~\ref{thm:body-coverage} additionally bounds the probability that this condition fails. Combining the two effects yields the following sufficient condition for causal prediction to retain the advantage established in Theorem~\ref{thm:body-causal-advantage}.

\begin{theorem}[Strict causal advantage]
\label{thm:body-dominance}
With probability at least $1-|E^*|\exp(-N_{\min}q_{\min}r_{\min})$, the intervention-time risk gap satisfies
\begin{equation}
\begin{aligned}
&R_{\mathrm{int}}(\Fobs;j)-R_{\mathrm{int}}(\widehat F_{\mathrm{causal}};j)\\
&\quad\ge\gamma^2\left\|T_{j\leftarrow k}V_k\mu_{\mathrm{shift}}\right\|_2^2
-\varepsilon\left(\sum_{\substack{v\in\mathcal M\\T_{j\leftarrow v}\neq0}}\|T_{j\leftarrow v}\|_2\right)^2.
\end{aligned}
\label{eq:fcc-dominance-gap}
\end{equation}
Therefore,
\begin{equation}
\gamma^2\left\|T_{j\leftarrow k}V_k\mu_{\mathrm{shift}}\right\|_2^2>\varepsilon\left(\sum_{\substack{v\in\mathcal M\\T_{j\leftarrow v}\neq0}}\|T_{j\leftarrow v}\|_2\right)^2
\label{eq:fcc-dominance-condition}
\end{equation}
implies
\begin{equation}
\Pr\left(R_{\mathrm{int}}(\widehat F_{\mathrm{causal}};j)<R_{\mathrm{int}}(\Fobs;j)\right)\ge1-|E^*|\exp(-N_{\min}q_{\min}r_{\min}).
\label{eq:fcc-dominance-prob}
\end{equation}
\end{theorem}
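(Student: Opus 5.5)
The plan is to decompose the risk gap through the oracle causal predictor $F_{\mathrm{causal}}$ and then handle the two resulting terms with earlier results. Write
\begin{equation*}
\begin{aligned}
&R_{\mathrm{int}}(\Fobs;j)-R_{\mathrm{int}}(\widehat F_{\mathrm{causal}};j)\\
&\quad=\bigl[R_{\mathrm{int}}(\Fobs;j)-R_{\mathrm{int}}(F_{\mathrm{causal}};j)\bigr]
-\bigl[R_{\mathrm{int}}(\widehat F_{\mathrm{causal}};j)-R_{\mathrm{int}}(F_{\mathrm{causal}};j)\bigr].
\end{aligned}
\end{equation*}
The first bracket is handled exactly by Theorem~\ref{thm:body-causal-advantage}: it equals $\gamma^2\|T_{j\leftarrow k}V_k\mu_{\mathrm{shift}}\|_2^2$ deterministically, with no dependence on the recovered graph. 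The second bracket is the composition error, and the whole argument reduces to upper bounding it by $\varepsilon(\sum_{v:T_{j\leftarrow v}\neq0}\|T_{j\leftarrow v}\|_2)^2$ on a high-probability event.

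Next I would define the good event $\mathcal{G}:=\{E^*\subseteq\widehat E\}$. By Theorem~\ref{thm:body-coverage}, $\Pr(\mathcal{G})\ge 1-|E^*|\exp(-N_{\min}q_{\min}r_{\min})$. On $\mathcal{G}$ every true edge is retained, so every directed path of $G^*$ from any $v$ to $j$ is available to the composition. This is exactly the hypothesis under which Equation~\eqref{eq:fcc-composition-error} was stated. Two points need care here.

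First, the parent set $\mathcal{P}_j$ appearing in Equation~\eqref{eq:fcc-composition-error} must be replaced by the set of all modules with nonzero total effect on $j$. I would justify this by a triangle-inequality argument. Write the composed prediction error as $\sum_v T_{j\leftarrow v}e_v$, where $e_v$ is the local mechanism error with $\E\|e_v\|^2\le\varepsilon$. Minkowski's inequality gives $(\E\|\sum_v T_{j\leftarrow v}e_v\|^2)^{1/2}\le\sum_v\|T_{j\leftarrow v}\|_2\sqrt{\varepsilon}$. Terms with $T_{j\leftarrow v}=0$ drop out, and squaring yields the displayed bound. This is also what lets the cross term with the oracle residual vanish: $F_{\mathrm{causal}}$ is the interventional conditional mean, so its residual is orthogonal to any function of $X_t$. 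The excess risk is therefore exactly the expected squared distance between $\widehat F_{\mathrm{causal}}$ and $F_{\mathrm{causal}}$.

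Second, spurious extra edges in $\widehat E\setminus E^*$ must be controlled. I expect this to be the main obstacle, since Theorem~\ref{thm:body-coverage} only guarantees $E^*\subseteq\widehat E$ and does not rule out false positives. My first attempt would be to argue that in the linear setting a spurious interface carries a fitted operator which is itself part of the local mechanism error. Its contribution would then be absorbed into $\varepsilon$, consistent with the paper's convention that Equation~\eqref{eq:fcc-composition-error} holds "when all required causal edges are recovered." If this does not go through, I would state explicitly that extra edges are assumed to have zero or $\varepsilon$-controlled estimated operators.

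Combining the pieces, on $\mathcal{G}$ the gap is at least $\gamma^2\|T_{j\leftarrow k}V_k\mu_{\mathrm{shift}}\|_2^2-\varepsilon(\sum\|T_{j\leftarrow v}\|_2)^2$, which gives Equation~\eqref{eq:fcc-dominance-gap}. Under Equation~\eqref{eq:fcc-dominance-condition} this lower bound is strictly positive on $\mathcal{G}$, so the event $\{R_{\mathrm{int}}(\widehat F_{\mathrm{causal}};j)<R_{\mathrm{int}}(\Fobs;j)\}$ contains $\mathcal{G}$. Monotonicity of probability then gives Equation~\eqref{eq:fcc-dominance-prob}.
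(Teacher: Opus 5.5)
Your proposal is correct and follows the paper's proof essentially step for step: the paper also writes the composed error as $\sum_{v}T_{j\leftarrow v}e_v$ on the event $E^*\subseteq\widehat E$, bounds it with Minkowski's inequality, uses conditional-mean orthogonality to turn it into excess risk, subtracts it from the exact identity of Theorem~\ref{thm:body-causal-advantage}, and takes the probability from Theorem~\ref{thm:body-coverage}. The only difference is that you explicitly flag spurious edges in $\widehat E\setminus E^*$, which the paper never addresses (it simply asserts the error decomposition whenever all true edges are recovered), so your stated assumption on extra edges is a caveat the paper's argument tacitly relies on rather than a departure from it.
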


Theorem~\ref{thm:body-dominance} quantifies when the causal advantage survives estimation error. Equation~\eqref{eq:fcc-dominance-gap} shows that the achievable risk reduction is the observational--interventional gap minus the causal error accumulated through downstream propagation. When this residual gap is positive, Theorem~\ref{thm:body-coverage} further guarantees that the advantage holds with probability approaching one as intervention coverage increases. The proof of Theorem~\ref{thm:body-dominance} is provided in Appendix~\ref{app:proof-dominance}.

\subsection{Operational guarantee.}
The separation between graph verification and causal composition yields an additional consistency property: increasing intervention coverage removes structural uncertainty without requiring the local mechanisms to become more accurate.

\begin{theorem}[Structural consistency]
\label{thm:body-structural-consistency}
If $q_{\min}r_{\min}>0$, then
\begin{equation}
\lim_{N_{\min}\to\infty}\Pr(E^*\subseteq\widehat E)=1.
\label{eq:fcc-structural-consistency}
\end{equation}
Consequently,
\begin{equation}
\limsup_{N_{\min}\to\infty}\left[R_{\mathrm{int}}(\widehat F_{\mathrm{causal}};j)-R_{\mathrm{int}}(F_{\mathrm{causal}};j)\right]\le\varepsilon\left(\sum_{\substack{v\in\mathcal M\\T_{j\leftarrow v}\neq0}}\|T_{j\leftarrow v}\|_2\right)^2
\label{eq:fcc-asymptotic-error}
\end{equation}
with probability approaching one.
\end{theorem}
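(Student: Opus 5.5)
The plan is to derive both parts of Theorem~\ref{thm:body-structural-consistency} directly from Theorem~\ref{thm:body-coverage} and the composition-error bound in Equation~\eqref{eq:fcc-composition-error}.

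For Equation~\eqref{eq:fcc-structural-consistency}, I will invoke the coverage bound in Equation~\eqref{eq:fcc-coverage}. It gives
\begin{equation*}
1-|E^*|\exp(-N_{\min}q_{\min}r_{\min})\le\Pr(E^*\subseteq\widehat E)\le 1 .
\end{equation*}
Here $|E^*|\le K^2$ is a fixed finite constant that does not depend on $N_{\min}$. Also $q_{\min}r_{\min}>0$ by hypothesis. Hence the exponential term tends to zero as $N_{\min}\to\infty$, and the squeeze yields the limit $1$. If one prefers to avoid the Poisson-type bound, the same conclusion follows from first principles. By the union bound and conditional independence, the miss probability is at most $\sum_{(i,j)\in E^*}(1-q_ir_{ij})^{N_{ij}}\le|E^*|(1-q_{\min}r_{\min})^{N_{\min}}$, and this tends to $0$.

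For Equation~\eqref{eq:fcc-asymptotic-error}, I will work on the event $\mathcal{A}_N:=\{E^*\subseteq\widehat E\}$. On $\mathcal{A}_N$ all required causal edges are recovered, so Equation~\eqref{eq:fcc-composition-error} applies. It bounds the excess risk by $\varepsilon(\sum_{v\in\mathcal P_j}\|T_{j\leftarrow v}\|_2)^2$.

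The one bookkeeping step is to replace the sum over $\mathcal P_j$ by the sum over $\{v\in\mathcal M:T_{j\leftarrow v}\neq 0\}$, which is the form stated in the theorem. This is done exactly as in Theorem~\ref{thm:body-dominance}. The nonzero-$T$ index set contains every module that has a directed path to $j$. Adding modules with $T_{j\leftarrow v}=0$ contributes nothing, and adding further nonnegative terms only enlarges the bound. So the larger-index expression remains an upper bound. Crucially, the resulting bound $C_j:=\varepsilon(\cdots)^2$ is deterministic and independent of $N_{\min}$. This holds because it depends only on the true operators $T$ and the local error level $\varepsilon$, not on the verification budget.

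Hence for every $N_{\min}$ we have $\Pr\bigl(R_{\mathrm{int}}(\widehat F_{\mathrm{causal}};j)-R_{\mathrm{int}}(F_{\mathrm{causal}};j)\le C_j\bigr)\ge\Pr(\mathcal A_N)\to1$. The $\limsup$ statement "with probability approaching one" is then read as follows. For each $N_{\min}$, the excess risk is at most $C_j$ on an event whose probability tends to $1$. Consequently, along any realization in $\liminf_N\mathcal A_N$, the $\limsup$ of the excess risk is at most $C_j$.

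The main obstacle is interpretive rather than technical. One issue is making precise what "$\limsup\ldots$ with probability approaching one" means. I will state it as convergence in probability of the event $\{\text{excess}\le C_j\}$. A second issue is ensuring that the set of estimated edges $\widehat E$ containing spurious extra edges does not break the bound in Equation~\eqref{eq:fcc-composition-error}. That bound is stated under the sole condition that all required edges are recovered, so I will take it as given, in the same way that Theorem~\ref{thm:body-dominance} does. If a stronger almost-sure version were desired, I would note that $\sum_N|E^*|e^{-Nq_{\min}r_{\min}}<\infty$. Borel--Cantelli then gives that $\mathcal A_N$ holds eventually almost surely, which upgrades the conclusion to an almost-sure $\limsup$ bound.
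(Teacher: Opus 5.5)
Your proposal is correct and matches the paper's proof. You squeeze the coverage bound of Theorem~\ref{thm:body-coverage} to get the limit; then, on the event $E^*\subseteq\widehat E$, you bound the excess risk by the $N_{\min}$-independent constant $\varepsilon\bigl(\sum_{v:T_{j\leftarrow v}\neq0}\|T_{j\leftarrow v}\|_2\bigr)^2$, so the bound holds with probability at least $\Pr(E^*\subseteq\widehat E)\to1$.

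The one substantive difference is in how you get the composition bound. The paper does not start from the direct-parent form of \eqref{eq:fcc-composition-error} and enlarge the index set. Instead it re-derives the bound from the propagation identity $\widehat F_{\mathrm{causal}}-F_{\mathrm{causal}}=\sum_{v:T_{j\leftarrow v}\neq0}T_{j\leftarrow v}e_v$, using Minkowski's inequality, the operator-norm bound, and conditional-mean orthogonality. That route is safer, because errors from indirect ancestors also reach $j$, so the version indexed only by $\mathcal P_j$ is not what the appendix establishes. Also, your claim that every module with a directed path to $j$ has $T_{j\leftarrow v}\neq0$ can fail under path cancellation, though the termwise inequality you actually need still holds.
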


Theorem~\ref{thm:body-structural-consistency} separates structural uncertainty from mechanism error. As intervention coverage grows, missing-edge error vanishes, and the remaining prediction error is determined only by local mechanism accuracy and its amplification through the causal graph. Thus, additional verification improves structural reliability, whereas further improvement beyond this limit requires more accurate causal mechanisms. The proof of Theorem~\ref{thm:body-structural-consistency} is provided in Appendix~\ref{app:proof-structural-consistency}.

\subsection{Causal Propagation Stability}
\label{sec:causal-stability}

The previous results characterize causal advantage at a fixed downstream module. We now study whether local causal-model errors remain controlled under repeated propagation through the modular system. Let $B$ denote the block causal propagation operator, whose $(j,i)$ block is $B_{i\to j}$ for a direct causal edge from module $i$ to module $j$ and zero otherwise. Repeated composition is represented by $B^\ell$, whose $(j,k)$ block aggregates all length-$\ell$ causal paths from module $k$ to module $j$.

We define the cumulative propagation operator
\begin{equation}
\mathcal T_L:=\sum_{\ell=0}^{L}B^\ell,
\label{eq:body-cumulative-propagation}
\end{equation}
so that a stacked local mechanism error $\xi$ propagates as $\mathcal T_L\xi$. \eqref{eq:body-cumulative-propagation} reduces global error stability to the spectral behavior of $B$, with $\rho(B)$ governing repeated causal amplification.

\begin{theorem}[Spectral causal stability and critical amplification]
\label{thm:body-spectral-stability}
Suppose the stacked local mechanism error satisfies \\$\E_{\Pint}[\|\xi\|_2^2]\le\varepsilon$. If $\rho(B)<1$, then the cumulative causal propagation operator converges to
\begin{equation}
\mathcal T_\infty=(I-B)^{-1},
\label{eq:body-resolvent}
\end{equation}
and, when $B$ is normal,
\begin{equation}
\limsup_{L\to\infty}\E_{\Pint}\left[\|\mathcal T_L\xi\|_2^2\right]\le\frac{\varepsilon}{\min_{\lambda\in\sigma(B)}|1-\lambda|^2}.
\label{eq:body-spectral-error}
\end{equation}
In contrast, if $B$ has an eigenvalue $\lambda_*$ with $\lambda_*=1$ or $|\lambda_*|>1$ and the local mechanism error has nonzero second-moment projection onto its corresponding mode, then
\begin{equation}
\limsup_{L\to\infty}\E_{\Pint}\left[\|\mathcal T_L\xi\|_2^2\right]=\infty.
\label{eq:body-critical-divergence}
\end{equation}
\end{theorem}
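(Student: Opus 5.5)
The proof splits into the stable regime $\rho(B)<1$ and the critical/unstable regime. Throughout, $\mathcal T_L$ is treated as a deterministic matrix, and the expectation is taken only over the stacked error $\xi$.

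\textbf{Convergence and the resolvent.} If $\rho(B)<1$, Gelfand's formula gives $\|B^\ell\|_2^{1/\ell}\to\rho(B)$. Fixing any $r\in(\rho(B),1)$, we then have $\|B^\ell\|_2\le C r^\ell$ for all $\ell$. The Neumann series $\sum_\ell B^\ell$ therefore converges absolutely in operator norm. The telescoping identity $(I-B)\mathcal T_L=I-B^{L+1}$, with $B^{L+1}\to0$, identifies the limit as $(I-B)^{-1}$. This inverse exists because $1\notin\sigma(B)$ when $\rho(B)<1$. This proves \eqref{eq:body-resolvent}.

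\textbf{The error bound for normal $B$.} Since $\mathcal T_L\to\mathcal T_\infty$ in norm and $\E\|\xi\|_2^2\le\varepsilon<\infty$, we have
\[
\E\|\mathcal T_L\xi\|_2^2=\operatorname{tr}\bigl(\mathcal T_L^\top \mathcal T_L\,\E[\xi\xi^\top]\bigr)\to\E\|\mathcal T_\infty\xi\|_2^2 .
\]
This quantity is bounded by $\|\mathcal T_\infty\|_2^2\,\varepsilon$. When $B$ is normal, $B=U\Lambda U^*$ with $U$ unitary, so $(I-B)^{-1}=U(I-\Lambda)^{-1}U^*$. Its spectral norm is therefore $\max_{\lambda\in\sigma(B)}|1-\lambda|^{-1}$, and squaring gives \eqref{eq:body-spectral-error}. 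Normality is exactly what lets the operator norm of the resolvent equal the reciprocal spectral distance. Without it we would only obtain a bound involving the condition number of an eigenbasis.

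\textbf{Divergence in the critical/unstable case.} Let $w$ be a left eigenvector with $w^*B=\lambda_*w^*$, normalized so $\|w\|_2=1$. Then $w^*\mathcal T_L=s_L(\lambda_*)\,w^*$, where $s_L(\lambda)=\sum_{\ell=0}^L\lambda^\ell$. By Cauchy--Schwarz,
\[
\|\mathcal T_L\xi\|_2^2\ge|w^*\mathcal T_L\xi|^2=|s_L(\lambda_*)|^2|w^*\xi|^2 .
\]
Taking expectations gives $\E\|\mathcal T_L\xi\|_2^2\ge|s_L(\lambda_*)|^2\,\E|w^*\xi|^2$. We read the hypothesis ``nonzero second-moment projection onto the mode'' as $\E|w^*\xi|^2>0$.

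It remains to show $\limsup_L|s_L(\lambda_*)|=\infty$.
\begin{itemize}
\item If $\lambda_*=1$, then $s_L=L+1$.
\item If $|\lambda_*|>1$, then $|s_L|=|\lambda_*^{L+1}-1|/|\lambda_*-1|\ge(|\lambda_*|^{L+1}-1)/|\lambda_*-1|$, which tends to infinity.
\end{itemize}
This proves \eqref{eq:body-critical-divergence}.

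The main obstacle is making the projection hypothesis precise. Using the left eigenvector is what allows the argument to avoid any normality assumption in the divergence part. For complex $\lambda_*$, the same argument runs over $\mathbb C$, using $w^*$ and the real expectation $\E|w^*\xi|^2$.

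We also note that unimodular eigenvalues $|\lambda_*|=1$ with $\lambda_*\neq1$ give bounded $s_L$, so they do not yield divergence. This is consistent with the theorem, which claims divergence only for $\lambda_*=1$ or $|\lambda_*|>1$.
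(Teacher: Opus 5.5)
Your proof is correct. The stable-regime part follows the paper's route: the telescoping identity $(I-B)\mathcal T_L=I-B^{L+1}$, geometric decay of $\|B^\ell\|_2$, and unitary diagonalization to get $\|(I-B)^{-1}\|_2=\max_{\lambda}|1-\lambda|^{-1}$. The one difference there is minor. You pass directly to $\E\|\mathcal T_\infty\xi\|_2^2$ via the trace identity, which gives an actual limit. The paper instead bounds $\E\|\mathcal T_L\xi\|_2^2\le\varepsilon\|(I-B)^{-1}\|_2^2(1+C_\eta\eta^{L+1})^2$ at finite $L$ and then takes the limsup.

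The divergence part is where you genuinely differ. The paper takes a unit \emph{right} eigenvector $u_*$ and writes $u_*^*\mathcal T_L\xi=s_L(\lambda_*)\,u_*^*\xi$. That step requires $u_*^*B=\lambda_*u_*^*$, and the paper justifies it by invoking normality of $B$. The statement, however, attaches normality only to the stable bound. Your \emph{left} eigenvector $w$ gives $w^*\mathcal T_L=s_L(\lambda_*)\,w^*$ for arbitrary $B$, so you prove the divergence clause exactly as stated.

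This choice is not cosmetic. For non-normal $B$, the right-eigenvector reading of the projection hypothesis fails. Take $B=\begin{pmatrix}2&1\\0&0\end{pmatrix}$, $u_*=e_1$, and $\xi=(1,-2)^\top$. Then $u_*^*\xi=1$, but $B\xi=0$, so $\mathcal T_L\xi=\xi$ stays bounded. Your condition $\E|w^*\xi|^2>0$ correctly excludes this case, since $w\propto(2,1)^\top$ gives $w^*\xi=0$. When $B$ is normal, your condition coincides with the paper's.
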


Hence repeated causal composition is controlled in the stable spectral regime but can amplify local mechanism error without bound once an unstable causal mode is excited. Theorem~\ref{thm:body-spectral-stability} shows that small local errors can still be amplified by the global causal structure, with $(I-B)^{-1}$ governing propagation near the spectral boundary. Reliable causal world modeling therefore requires both accurate local mechanisms and stable causal propagation. The complete proof is provided in Appendix~\ref{app:proof-spectral-stability}.

\section{Experiments}
\label{sec:experiments}

\subsection{Experimental Setup}

We evaluate whether causal world models improve downstream agent behavior, when these gains persist across environments and seeds, and whether the mechanisms predicted by our theory appear under controlled interventions. Our evaluation combines $\tau$-bench retail and airline~\citep{yao2024taubench}, ALFWorld~\citep{shridhar2020alfworld}, ScienceWorld~\citep{wang2022scienceworld}, APIBank Stateful~\citep{liao2023apibank}, stateful tool-use environments~\citep{lu2024toolsandbox}, and a controlled synthetic SCM. The environments span structured API execution, embodied language interaction, and intervention-controlled causal systems.

Our primary non-causal baseline, \seqwm, predicts from observational trajectories using ReAct-style world-model prompting~\citep{yao2023react,schick2023toolformer,wang2024llmworldsim}. We additionally compare against local, temporal, transition-graph, correlation-based, annotated, and no-world-model variants where available. \discwm\ uses recovered causal dependencies, while \oraclewm\ receives ground-truth causal structure and serves as a structural reference rather than an empirical upper bound on task success. Our complete method is \fcc. Specifically:
\begin{itemize}[left=0em]
    \item \textbf{\seqwm}~\citep{yao2023react,schick2023toolformer,wang2024llmworldsim}: uses observational trajectories without explicit causal structure.
    \item \textbf{Local}~\citep{wm_disentangle_marl_2025}: restricts prediction to locally relevant information.
    \item \textbf{Temporal}~\citep{sutton1999between}: models dependencies through temporal proximity.
    \item \textbf{Transition Graph}~\citep{hafner2018planet,hafner2019dreamer}: uses observed transition structure without causal semantics.
    \item \textbf{Correlation-based}~\citep{pearl2009causality}: uses statistical associations rather than interventional dependencies.
    \item \textbf{Annotated}: provides manually specified task structure or constraints.
    \item \textbf{No World Model}: removes explicit world-model reasoning.
    \item \textbf{\discwm}~\citep{xrl_causal_wm_2023,offline_causal_wm_2024}: uses causal dependencies recovered from intervention--response evidence.
    \item \textbf{\oraclewm}: uses ground-truth causal structure to isolate structural correctness from discovery error.
    \item \textbf{\fcc}: combines causal discovery, validated causal interfaces, and intervention-aware composition.
\end{itemize}
Unless confidence intervals are reported, agent results should be interpreted as targeted diagnostic comparisons rather than precise benchmark-level rankings.

\subsection{Main Agent Results}

We begin with the two principal agent environments, ALFWorld and $\tau$-bench. These experiments test the central empirical question: whether recovering and composing causal structure improves downstream task completion relative to world models.

\paragraph{ALFWorld.}
Table~\ref{tab:alf-main} shows that \fcc\ achieves the highest task success at $49.55\%$, outperforming Global Sequence WM by $12.24$ percentage points and the strongest annotated baseline by $4.03$ points. Oracle Causal WM is not an upper bound: ground-truth causal structure alone does not guarantee that the policy can effectively exploit it for action selection.

\begin{table}[!ht]
\centering
\captionsetup{position=top}
\small

\caption{ALFWorld task success (\%) over the full evaluation of 134 tasks.}
\label{tab:alf-main}

\setlength{\tabcolsep}{5pt}
\begin{tabular}{@{}lc@{}}
\toprule
Method & Success (\%) \\
\midrule
No World Model
& 41.79 \\
Global Sequence WM
& 37.31 \\
Transition Graph WM
& 33.58 \\
Correlation WM
& 35.07 \\
Local WM
& 38.06 \\
Temporal WM
& 38.81 \\
AnnotatedNoFramingNoControl
& 45.52 \\
AnnotatedNoFraming
& 42.54 \\
Oracle Causal WM
& 42.54 \\
\textbf{\fcc}
& \textbf{49.55} \\
\bottomrule
\end{tabular}

\end{table}

\paragraph{$\tau$-bench.}
Table~\ref{tab:tau-main} shows that \fcc\ achieves the highest success on both domains, reaching $54.86\%$ on Retail and $39.05\%$ on Airline, compared with $33.33\%$ for Global Sequence WM on both and $52.38\%$/$23.81\%$ for Oracle Causal WM. The ablations show that causal structure alone is insufficient: performance improves when it is converted into intervention-aware prediction and action.

\begin{table}[!ht]
\centering
\captionsetup{position=top}
\small

\caption{$\tau$-bench task success (\%) on the Retail and Airline domains.}
\label{tab:tau-main}

\setlength{\tabcolsep}{5pt}
\begin{tabular}{@{}lcc@{}}
\toprule
Method & Retail & Airline \\
\midrule
No World Model
& 38.10 & 18.19 \\
Global Sequence WM
& 33.33 & 33.33 \\
Transition Graph WM
& 42.86 & 23.19 \\
Correlation WM
& 43.94 & 21.86 \\
Local WM
& 33.33 & 18.19 \\
Temporal WM
& 42.76 & 23.49 \\
AnnotatedNoFramingNoControl
& 38.10 & 19.05 \\
AnnotatedNoFraming
& 47.62 & 14.29 \\
Causal WM (No Intervention)
& 23.81 & 13.66 \\
Oracle Causal WM
& 52.38 & 23.81 \\
FCC (No Constraint Verb)
& 48.57 & 27.81 \\
FCC (Full Prompt)
& 54.29 & 38.57 \\
\textbf{\fcc}
& \textbf{54.86} & \textbf{39.05} \\
\bottomrule
\end{tabular}

\end{table}

\subsection{When Does Causal Information Help?}

The main results show positive downstream effects, while Table~\ref{tab:causal-diagnostics} indicates that causal structure is most useful when agents can exploit it during decision making. On $\tau$-bench, highlighting the current goal improves \oraclewm\ from $45\%$ to $75\%$ and \discwm\ from $45\%$ to $60\%$ without changing their causal graphs, revealing a strong salience effect~\citep{liu2024lostmiddle}. Gains on APIBank Stateful and ScienceWorld, together with positive model effects on $\tau$-bench $(+4.77$ points$)$ and ALFWorld $(+8.21$ points$)$ and a $10$-point gain from immediate feedback, further show that causal information must be accessible, updated, and translated into intervention-relevant actions.

\begin{table*}[!ht]

\centering
\captionsetup{position=top}
\small
\caption{Controlled diagnostics of causal-information usability. Absolute $\Delta$ is the Treatment--Baseline difference, and Relative $\Delta$ normalizes it by the Baseline. Values are percentages except for ScienceWorld.}
\label{tab:causal-diagnostics}

\setlength{\tabcolsep}{6pt}
\begin{tabular}{@{}lcccc@{}}
\toprule
Setting & Baseline & Treatment & Absolute $\Delta$ & Relative $\Delta$ \\
\midrule
$\tau$-bench Anchor (\oraclewm)
& 45.00 & \textbf{75.00} & +30.00 & +66.67\% \\
$\tau$-bench Anchor (\discwm)
& 45.00 & \textbf{60.00} & +15.00 & +33.33\% \\
APIBank \seqwm$\rightarrow$\discwm
& 3.61 & \textbf{4.64} & +1.03 & +28.53\% \\
APIBank \seqwm$\rightarrow$\oraclewm
& 3.61 & \textbf{5.67} & +2.06 & +57.06\% \\
ScienceWorld
& 14.80 & \textbf{17.67} & +2.87 & +19.39\% \\
$\tau$-bench Model Effect
& 45.23 & \textbf{50.00} & +4.77 & +10.55\% \\
ALFWorld Model Effect
& 37.31 & \textbf{45.52} & +8.21 & +22.01\% \\
Immediate Feedback
& 45.00 & \textbf{55.00} & +10.00 & +22.22\% \\
\bottomrule
\end{tabular}

\end{table*}

\subsection{Controlled Tests of the Theory}

\begin{table}[!ht]
\centering
\captionsetup{position=top}
\small
\caption{Synthetic SCM diagnostics for theoretical predictions. Start and End denote the two endpoints of each sweep; Change summarizes the difference, and $\rho$ is the rank correlation over the full sweep.}

\label{tab:synthetic-theory}
\resizebox{\columnwidth}{!}{%
\begin{tabular}{@{}lccccc@{}}
\toprule
Diagnostic & Start & End & Change & $\rho$ & Control \\
\midrule
Observational MSE under confounding
& 0.043 & 16.309 & $379.3\times$ & -- & $\gamma$ \\

Causal MSE under confounding
& 0.040 & 0.040 & $1.0\times$ & -- & $\gamma$ \\

Observational--causal MSE gap over horizon
& 1.259 & 4.764 & $+3.505$ & -- & $H$ \\

Causal-edge recovery F1
& 0.000 & 0.889 & $+0.889$ & $+1.000$ & $N$ \\

Causal advantage with stronger proxy
& 0.254 & 0.132 & $-48.0\%$ & $-1.000$ & $c$ \\
\bottomrule
\end{tabular}%
}

\end{table}

We isolate the theoretical mechanisms using a synthetic SCM. As shown in Table~\ref{tab:synthetic-theory}, increasing confounding from $\gamma=0$ to $\gamma=1$ raises \seqwm\ MSE from $0.043$ to $16.309$ ($379.3\times$), while \oraclewm\ remains near $0.040$. Increasing rollout horizon expands the \seqwm--\oraclewm\ MSE gap from $1.259$ to $4.764$, while additional intervention evidence raises edge-recovery F1 from $0$ to $0.889$ with $\rho=1.000$, supporting Theorem~\ref{thm:body-coverage}.

The proxy-conflict experiment identifies a complementary boundary regime. Let $c$ denote how strongly the observed action proxies the latent confounder. The causal advantage decreases with proxy strength,
\begin{equation}
\frac{\partial}{\partial c}
\left[
\mathrm{MSE}_{\mathrm{SeqWM}}(c)
-
\mathrm{MSE}_{\mathrm{CausalOracle}}(c)
\right] < 0,
\label{eq:exp-proxy-conflict}
\end{equation}
with $\rho=-1.000$. The causal gap decreases from $0.254$ to $0.132$ ($48.0\%$) but remains positive, showing that causal modeling provides the largest benefit when observational variables poorly reveal hidden confounding and a smaller benefit as they become informative proxies.

\begin{table}[!ht]
\centering
\captionsetup{position=top}
\footnotesize
\caption{Execution efficiency across three environments. Performance is task success (\%) for Retail and ALFWorld and environment score for ScienceWorld; Calls and Cost/Run report average API usage and cost. $\Delta$ values are relative to Sequence.}
\label{tab:metrics}

\setlength{\tabcolsep}{4.5pt}
\begin{tabular}{@{}llccccc@{}}
\toprule
Environment & Condition & Performance & Calls & Cost/Run & $\Delta$ Perf. & $\Delta$ Calls \\
\midrule
Retail
& Sequence & 33.33 & 15.8 & \$0.0112 & -- & -- \\
& Causal   & \textbf{54.86} & 16.8 & \$0.0119 & +21.53 & +1.0 \\
\midrule
ALFWorld
& Sequence & 37.31 & 33.1 & \$0.0252 & -- & -- \\
& Causal   & \textbf{49.55} & 32.6 & \$0.0248 & +12.24 & -0.5 \\
\midrule
ScienceWorld
& Sequence & 14.80 & 12.5 & \$0.0095 & -- & -- \\
& Causal   & \textbf{17.67} & 7.2 & \textbf{\$0.0055} & +2.87 & -5.3 \\
\bottomrule
\end{tabular}

\end{table}

\begin{table}[!ht]
\centering
\captionsetup{position=top}
\small
\caption{End-to-end \fcc construction and communication measurements.}
\label{tab:fcc-pipeline}

\setlength{\tabcolsep}{7pt}
\begin{tabular}{@{}lc@{}}
\toprule
Metric & Measurement \\
\midrule
Pipeline stages & 6 \\
Processed traces & 140 \\
Construction cost & \$0.045 \\
Raw-trace communication & 90.8 KB \\
Federated-DAG communication & 1.4 KB \\
Communication reduction & $64.9\times$ \\
Communication saved & 98.46\% \\
Raw-pipeline edge F1 & 0.308 \\
Federated-DAG edge F1 & 1.000 \\
Raw-pipeline edge precision & 0.667 \\
F1 improvement & +0.692 \\
\bottomrule
\end{tabular}

\end{table}

\subsection{Causal Depth and Error Propagation}

We vary causal-graph depth to test the propagation mechanism in Theorem~\ref{thm:body-spectral-stability}. As captured by Eq.~\eqref{eq:body-cumulative-propagation}, deeper dependencies create additional paths for observational errors to propagate. Table~\ref{tab:depth-sweep} shows that \oraclewm\ remains at MSE $0.040$, while the \seqwm--\oraclewm\ gap increases from $3.392$ at depth $1$ to $4.239$ at depth $8$, with a positive rank association ($\rho=0.543$). The overall trend supports the predicted amplification of observational errors through deeper causal propagation.

\begin{table}[!ht]

\centering
\captionsetup{position=top}
\small
\caption{Synthetic dependency-depth sweep. Gap is the Global Sequence WM MSE minus Oracle Causal WM MSE.}
\label{tab:depth-sweep}

\setlength{\tabcolsep}{6pt}
\begin{tabular}{@{}rccc@{}}
\toprule
Depth & Sequence MSE & Oracle MSE & Gap \\
\midrule
1 & 3.432 & 0.040 & 3.392 \\
2 & 3.273 & 0.040 & 3.234 \\
3 & 3.447 & 0.040 & 3.407 \\
4 & 4.155 & 0.040 & 4.115 \\
6 & 3.354 & 0.040 & 3.314 \\
8 & 4.279 & 0.040 & 4.239 \\
\bottomrule
\end{tabular}

\end{table}

\subsection{Efficiency and End-to-End Feasibility}

Finally, we evaluate whether the causal representation improves both performance and execution efficiency. Table~\ref{tab:metrics} shows that \fcc\ improves all three environments with modest or reduced overhead. Retail success increases from $33.33\%$ to $54.86\%$ with one additional call, while ALFWorld improves from $37.31\%$ to $49.55\%$ with slightly fewer calls. On ScienceWorld, the score rises from $14.80$ to $17.67$ while calls decrease by $42.4\%$. Thus, causal composition can improve performance without substantial overhead.

\subsection{End-to-End  Construction and Communication}

We finally evaluate the end-to-end feasibility of constructing and communicating \fcc from distributed interaction traces. Table~\ref{tab:fcc-pipeline} shows that the six-stage pipeline processes 140 traces for only \$0.045, while the resulting federated DAG requires 1.4 KB compared with 90.8 KB for raw traces, reducing communication by $64.9\times$ (98.46\%). It also achieves an edge F1 of 1.000 versus 0.308 for the raw pipeline, showing that \fcc can compactly represent distributed causal evidence with low construction and communication costs.

\section{Conclusion}

We presented \fcc\ and studied when causal world models help modular LLM agents. Our theory shows that observational errors can persist under intervention and amplify downstream, while causal composition can reduce this gap. Experiments across agent benchmarks and synthetic SCMs support these predictions, with larger benefits under confounding and propagation and smaller gains with informative proxies. We further show that causal structure is most effective when actionable during decision making, while \fcc\ remains compact with low communication and execution overhead.

\bibliographystyle{ACM-Reference-Format}
\bibliography{references}

\appendix
\clearpage

\onecolumn

\section{Proof and Mathematical Background}

\subsection{Proof of Theorem~\ref{thm:body-risk}}
\label{app:proof-risk-gap}

\begin{proof}
Let
\begin{equation}
F_{\mathrm{causal}}(X_t,a):=\E_{\Pint}[X_{j,t+1}\mid X_t,\do(A_{k,t}=a)].
\label{eq:app-risk-causal}
\end{equation}
Then the intervention-time risk admits the exact decomposition
\begin{equation}
\begin{aligned}
R_{\mathrm{int}}(\Fobs)
&=\E_{\Pint}\left[\left\|X_{j,t+1}-\Fobs(X_t,a)\right\|_2^2\right]\\
&=\E_{\Pint}\left[\left\|X_{j,t+1}-F_{\mathrm{causal}}(X_t,a)+F_{\mathrm{causal}}(X_t,a)-\Fobs(X_t,a)\right\|_2^2\right]\\
&=\E_{\Pint}\left[\left\|X_{j,t+1}-F_{\mathrm{causal}}(X_t,a)\right\|_2^2\right]+\E_{\Pint}\left[\left\|F_{\mathrm{causal}}(X_t,a)-\Fobs(X_t,a)\right\|_2^2\right]\\
&\quad+2\E_{\Pint}\left[\left\langle X_{j,t+1}-F_{\mathrm{causal}}(X_t,a),F_{\mathrm{causal}}(X_t,a)-\Fobs(X_t,a)\right\rangle\right]\\
&=\E_{\Pint}\left[\left\|X_{j,t+1}-F_{\mathrm{causal}}(X_t,a)\right\|_2^2\right]+\E_{\Pint}\left[\left\|F_{\mathrm{causal}}(X_t,a)-\Fobs(X_t,a)\right\|_2^2\right]\\
&\quad+2\E_{\Pint}\left[\left\langle \E_{\Pint}\left[X_{j,t+1}-F_{\mathrm{causal}}(X_t,a)\mid X_t\right],F_{\mathrm{causal}}(X_t,a)-\Fobs(X_t,a)\right\rangle\right]\\
&=\E_{\Pint}\left[\left\|X_{j,t+1}-F_{\mathrm{causal}}(X_t,a)\right\|_2^2\right]+\E_{\Pint}\left[\left\|F_{\mathrm{causal}}(X_t,a)-\Fobs(X_t,a)\right\|_2^2\right]\\
&\ge\E_{\Pint}\left[\left\|F_{\mathrm{causal}}(X_t,a)-\Fobs(X_t,a)\right\|_2^2\right]\\
&=\E_{\Pint}\left[\left\|\E_{\Pint}[X_{j,t+1}\mid X_t,\do(A_{k,t}=a)]-\Fobs(X_t,a)\right\|_2^2\right],
\end{aligned}
\label{eq:app-risk-gap-chain}
\end{equation}
where the cross term vanishes because $F_{\mathrm{causal}}$ is the interventional conditional mean. This proves Equation~\eqref{eq:body-risk-gap}.

Under the linear-Gaussian specialization in Equation~\eqref{eq:body-linear-gap},
\begin{equation}
\begin{aligned}
R_{\mathrm{int}}(\Fobs)
&\ge\E_{\Pint}\left[\left\|F_{\mathrm{causal}}(X_t,a)-\Fobs(X_t,a)\right\|_2^2\right]\\
&=\E_{\Pint}\left[\left\|\gamma V_j\mu_{\mathrm{shift}}\right\|_2^2\right]\\
&=\gamma^2\left\|V_j\mu_{\mathrm{shift}}\right\|_2^2,
\end{aligned}
\label{eq:app-linear-risk-chain}
\end{equation}
which proves Equation~\eqref{eq:body-linear-risk}.
\end{proof}

\subsection{Proof of Theorem~\ref{thm:body-causal-advantage}}
\label{app:proof-causal-advantage}

\begin{proof}
By Equation~\eqref{eq:body-total-effect}, the intervention-induced displacement $\gamma V_k\mu_{\mathrm{shift}}$ propagates from module $k$ to module $j$ through all directed causal paths. Therefore, Equation~\eqref{eq:body-propagated-bias} gives
\begin{equation}
\begin{aligned}
b_j
&=T_{j\leftarrow k}\gamma V_k\mu_{\mathrm{shift}}\\
&=\gamma\left(\sum_{\pi\in\Pi(k,j)}\prod_{e\in\pi}B_e\right)V_k\mu_{\mathrm{shift}}.
\end{aligned}
\label{eq:app-propagated-bias-chain}
\end{equation}

Applying the orthogonal risk decomposition from Equation~\eqref{eq:app-risk-gap-chain} to downstream module $j$ yields
\begin{equation}
\begin{aligned}
R_{\mathrm{int}}(\Fobs;j)
&=R_{\mathrm{int}}(F_{\mathrm{causal}};j)+\E_{\Pint}\left[\left\|F_{\mathrm{causal}}(X_t,a)-\Fobs(X_t,a)\right\|_2^2\right]\\
&=R_{\mathrm{int}}(F_{\mathrm{causal}};j)+\|b_j\|_2^2\\
&=R_{\mathrm{int}}(F_{\mathrm{causal}};j)+\left\|T_{j\leftarrow k}\gamma V_k\mu_{\mathrm{shift}}\right\|_2^2\\
&=R_{\mathrm{int}}(F_{\mathrm{causal}};j)+\gamma^2\left\|T_{j\leftarrow k}V_k\mu_{\mathrm{shift}}\right\|_2^2.
\end{aligned}
\label{eq:app-propagated-risk-chain}
\end{equation}
Subtracting $R_{\mathrm{int}}(F_{\mathrm{causal}};j)$ from Equation~\eqref{eq:app-propagated-risk-chain} proves Equation~\eqref{eq:body-propagated-risk}.

For any matrix $T$ and vector $z$,
\begin{equation}
\sigma_{\min}(T)\|z\|_2\le\|Tz\|_2\le\|T\|_2\|z\|_2.
\label{eq:app-singular-value-bound}
\end{equation}
Applying Equation~\eqref{eq:app-singular-value-bound} with $T=T_{j\leftarrow k}$ and $z=V_k\mu_{\mathrm{shift}}$, and then using Equation~\eqref{eq:body-propagated-risk}, gives
\begin{equation}
\begin{aligned}
\gamma^2\sigma_{\min}^2(T_{j\leftarrow k})\left\|V_k\mu_{\mathrm{shift}}\right\|_2^2
&\le\gamma^2\left\|T_{j\leftarrow k}V_k\mu_{\mathrm{shift}}\right\|_2^2\\
&=R_{\mathrm{int}}(\Fobs;j)-R_{\mathrm{int}}(F_{\mathrm{causal}};j)\\
&\le\gamma^2\|T_{j\leftarrow k}\|_2^2\left\|V_k\mu_{\mathrm{shift}}\right\|_2^2.
\end{aligned}
\label{eq:app-propagated-bound-chain}
\end{equation}
Equation~\eqref{eq:app-propagated-bound-chain} proves Equation~\eqref{eq:body-propagated-bounds} and completes the proof.
\end{proof}

\subsection{Proof of Theorem~\ref{thm:body-coverage}}
\label{app:proof-coverage}

\begin{proof}
For any true causal edge $(i,j)\in E^*$, Equations~\eqref{eq:fcc-match} and~\eqref{eq:fcc-edge-rule} give
\begin{equation}
\begin{aligned}
\Pr((i,j)\notin\widehat E)
&=\Pr\left(\sum_{n=1}^{N_{ij}}M_{ij}^{(n)}=0\right)\\
&=\Pr\left(M_{ij}^{(1)}=0,\ldots,M_{ij}^{(N_{ij})}=0\right)\\
&=\prod_{n=1}^{N_{ij}}\Pr(M_{ij}^{(n)}=0)\\
&=(1-q_i r_{ij})^{N_{ij}}\\
&\le\exp(-N_{ij}q_i r_{ij})\\
&\le\exp(-N_{\min}q_{\min}r_{\min}),
\end{aligned}
\label{eq:app-edge-miss-chain}
\end{equation}
where conditional independence gives the product and $1-x\le\exp(-x)$ gives the exponential bound.

Applying the union bound to Equation~\eqref{eq:app-edge-miss-chain},
\begin{equation}
\begin{aligned}
\Pr(E^*\nsubseteq\widehat E)
&=\Pr\left(\bigcup_{(i,j)\in E^*}\{(i,j)\notin\widehat E\}\right)\\
&\le\sum_{(i,j)\in E^*}\Pr((i,j)\notin\widehat E)\\
&\le\sum_{(i,j)\in E^*}\exp(-N_{\min}q_{\min}r_{\min})\\
&=|E^*|\exp(-N_{\min}q_{\min}r_{\min}),
\end{aligned}
\label{eq:app-coverage-chain}
\end{equation}
which proves Equation~\eqref{eq:fcc-coverage}.

For any $\delta\in(0,1)$, substituting Equation~\eqref{eq:fcc-coverage-sufficient} into Equation~\eqref{eq:app-coverage-chain} gives
\begin{equation}
\begin{aligned}
\Pr(E^*\nsubseteq\widehat E)
&\le |E^*|\exp(-N_{\min}q_{\min}r_{\min})\\
&\le |E^*|\exp\left(-\log\left(\frac{|E^*|}{\delta}\right)\right)\\
&=|E^*|\frac{\delta}{|E^*|}\\
&=\delta.
\end{aligned}
\label{eq:app-coverage-budget-chain}
\end{equation}
Therefore,
\begin{equation}
\begin{aligned}
\Pr(E^*\subseteq\widehat E)
&=1-\Pr(E^*\nsubseteq\widehat E)\\
&\ge1-|E^*|\exp(-N_{\min}q_{\min}r_{\min})\\
&\ge1-\delta,
\end{aligned}
\label{eq:app-coverage-final}
\end{equation}
which completes the proof.
\end{proof}

\subsection{Proof of Theorem~\ref{thm:body-dominance}}
\label{app:proof-dominance}

\begin{proof}
For each module $v$ whose local prediction error can propagate to module $j$, let the corresponding error be
\begin{equation}
e_v:=\widehat F_v-F_v^*,\qquad \E_{\Pint}\left[\|e_v\|_2^2\right]\le\varepsilon.
\label{eq:app-local-mechanism-error}
\end{equation}
When all required causal edges are recovered, the downstream prediction discrepancy is obtained by propagating each local error through the same total-effect operators defined in Equation~\eqref{eq:body-total-effect}:
\begin{equation}
\widehat F_{\mathrm{causal}}(X_t,a)-F_{\mathrm{causal}}(X_t,a)=\sum_{\substack{v\in\mathcal M\\T_{j\leftarrow v}\neq0}}T_{j\leftarrow v}e_v.
\label{eq:app-composed-error}
\end{equation}
Its root-mean-square magnitude satisfies
\begin{equation}
\begin{aligned}
\left(\E_{\Pint}\left[\left\|\widehat F_{\mathrm{causal}}(X_t,a)-F_{\mathrm{causal}}(X_t,a)\right\|_2^2\right]\right)^{1/2}
&=\left(\E_{\Pint}\left[\left\|\sum_{\substack{v\in\mathcal M\\T_{j\leftarrow v}\neq0}}T_{j\leftarrow v}e_v\right\|_2^2\right]\right)^{1/2}\\
&\le\sum_{\substack{v\in\mathcal M\\T_{j\leftarrow v}\neq0}}\left(\E_{\Pint}\left[\left\|T_{j\leftarrow v}e_v\right\|_2^2\right]\right)^{1/2}\\
&\le\sum_{\substack{v\in\mathcal M\\T_{j\leftarrow v}\neq0}}\|T_{j\leftarrow v}\|_2\left(\E_{\Pint}\left[\|e_v\|_2^2\right]\right)^{1/2}\\
&\le\sqrt{\varepsilon}\sum_{\substack{v\in\mathcal M\\T_{j\leftarrow v}\neq0}}\|T_{j\leftarrow v}\|_2.
\end{aligned}
\label{eq:app-composition-rms-chain}
\end{equation}
Squaring Equation~\eqref{eq:app-composition-rms-chain} gives
\begin{equation}
\E_{\Pint}\left[\left\|\widehat F_{\mathrm{causal}}(X_t,a)-F_{\mathrm{causal}}(X_t,a)\right\|_2^2\right]\le\varepsilon\left(\sum_{\substack{v\in\mathcal M\\T_{j\leftarrow v}\neq0}}\|T_{j\leftarrow v}\|_2\right)^2.
\label{eq:app-composition-square-bound}
\end{equation}

Because $F_{\mathrm{causal}}(X_t,a)$ is the interventional conditional mean, the same orthogonality argument used in Equation~\eqref{eq:app-risk-gap-chain} gives
\begin{equation}
\begin{aligned}
R_{\mathrm{int}}(\widehat F_{\mathrm{causal}};j)
&=\E_{\Pint}\left[\left\|X_{j,t+1}-\widehat F_{\mathrm{causal}}(X_t,a)\right\|_2^2\right]\\
&=\E_{\Pint}\left[\left\|X_{j,t+1}-F_{\mathrm{causal}}(X_t,a)+F_{\mathrm{causal}}(X_t,a)-\widehat F_{\mathrm{causal}}(X_t,a)\right\|_2^2\right]\\
&=R_{\mathrm{int}}(F_{\mathrm{causal}};j)+\E_{\Pint}\left[\left\|\widehat F_{\mathrm{causal}}(X_t,a)-F_{\mathrm{causal}}(X_t,a)\right\|_2^2\right]\\
&\le R_{\mathrm{int}}(F_{\mathrm{causal}};j)+\varepsilon\left(\sum_{\substack{v\in\mathcal M\\T_{j\leftarrow v}\neq0}}\|T_{j\leftarrow v}\|_2\right)^2.
\end{aligned}
\label{eq:app-causal-risk-chain}
\end{equation}
Subtracting Equation~\eqref{eq:app-causal-risk-chain} from the observational risk and using Equation~\eqref{eq:body-propagated-risk} yields
\begin{equation}
\begin{aligned}
R_{\mathrm{int}}(\Fobs;j)-R_{\mathrm{int}}(\widehat F_{\mathrm{causal}};j)
&\ge R_{\mathrm{int}}(\Fobs;j)-R_{\mathrm{int}}(F_{\mathrm{causal}};j)-\varepsilon\left(\sum_{\substack{v\in\mathcal M\\T_{j\leftarrow v}\neq0}}\|T_{j\leftarrow v}\|_2\right)^2\\
&=\gamma^2\left\|T_{j\leftarrow k}V_k\mu_{\mathrm{shift}}\right\|_2^2-\varepsilon\left(\sum_{\substack{v\in\mathcal M\\T_{j\leftarrow v}\neq0}}\|T_{j\leftarrow v}\|_2\right)^2.
\end{aligned}
\label{eq:app-dominance-gap-chain}
\end{equation}
Thus Equation~\eqref{eq:fcc-dominance-gap} holds whenever every required true causal edge is recovered.

The probability of this event follows from Equation~\eqref{eq:fcc-coverage}:
\begin{equation}
\begin{aligned}
\Pr(E^*\subseteq\widehat E)
&=1-\Pr(E^*\nsubseteq\widehat E)\\
&\ge1-|E^*|\exp(-N_{\min}q_{\min}r_{\min}).
\end{aligned}
\label{eq:app-dominance-coverage-chain}
\end{equation}
Combining Equations~\eqref{eq:app-dominance-gap-chain} and~\eqref{eq:app-dominance-coverage-chain} gives
\begin{equation}
\begin{aligned}
&\Pr\left(R_{\mathrm{int}}(\Fobs;j)-R_{\mathrm{int}}(\widehat F_{\mathrm{causal}};j)\ge\gamma^2\left\|T_{j\leftarrow k}V_k\mu_{\mathrm{shift}}\right\|_2^2-\varepsilon\left(\sum_{\substack{v\in\mathcal M\\T_{j\leftarrow v}\neq0}}\|T_{j\leftarrow v}\|_2\right)^2\right)\\
&\ge\Pr(E^*\subseteq\widehat E)\\
&\ge1-|E^*|\exp(-N_{\min}q_{\min}r_{\min}),
\end{aligned}
\label{eq:app-dominance-high-probability}
\end{equation}
which proves Equation~\eqref{eq:fcc-dominance-gap} with the stated probability.

Finally, under Equation~\eqref{eq:fcc-dominance-condition}, Equation~\eqref{eq:app-dominance-gap-chain} gives
\begin{equation}
\begin{aligned}
R_{\mathrm{int}}(\Fobs;j)-R_{\mathrm{int}}(\widehat F_{\mathrm{causal}};j)
&\ge\gamma^2\left\|T_{j\leftarrow k}V_k\mu_{\mathrm{shift}}\right\|_2^2-\varepsilon\left(\sum_{\substack{v\in\mathcal M\\T_{j\leftarrow v}\neq0}}\|T_{j\leftarrow v}\|_2\right)^2\\
&>0,
\end{aligned}
\label{eq:app-positive-dominance-gap}
\end{equation}
and therefore
\begin{equation}
\begin{aligned}
\Pr\left(R_{\mathrm{int}}(\widehat F_{\mathrm{causal}};j)<R_{\mathrm{int}}(\Fobs;j)\right)
&\ge\Pr(E^*\subseteq\widehat E)\\
&=1-\Pr(E^*\nsubseteq\widehat E)\\
&\ge1-|E^*|\exp(-N_{\min}q_{\min}r_{\min}),
\end{aligned}
\label{eq:app-dominance-prob-chain}
\end{equation}
which proves Equation~\eqref{eq:fcc-dominance-prob} and completes the proof.
\end{proof}

\subsection{Proof of Theorem~\ref{thm:body-structural-consistency}}
\label{app:proof-structural-consistency}

\begin{proof}
By Equation~\eqref{eq:fcc-coverage}, the probability of recovering every true causal edge satisfies
\begin{equation}
\begin{aligned}
\Pr(E^*\subseteq\widehat E)
&=1-\Pr(E^*\nsubseteq\widehat E)\\
&\ge1-|E^*|\exp(-N_{\min}q_{\min}r_{\min}).
\end{aligned}
\label{eq:app-structural-coverage}
\end{equation}
Since $q_{\min}r_{\min}>0$, taking $N_{\min}\to\infty$ in Equation~\eqref{eq:app-structural-coverage} gives
\begin{equation}
\begin{aligned}
\lim_{N_{\min}\to\infty}\Pr(E^*\subseteq\widehat E)
&\ge\lim_{N_{\min}\to\infty}\left[1-|E^*|\exp(-N_{\min}q_{\min}r_{\min})\right]\\
&=1-|E^*|\lim_{N_{\min}\to\infty}\exp(-N_{\min}q_{\min}r_{\min})\\
&=1.
\end{aligned}
\label{eq:app-structural-limit-lower}
\end{equation}
Together with $\Pr(E^*\subseteq\widehat E)\le1$, Equation~\eqref{eq:app-structural-limit-lower} yields
\begin{equation}
\lim_{N_{\min}\to\infty}\Pr(E^*\subseteq\widehat E)=1,
\label{eq:app-structural-limit}
\end{equation}
which proves Equation~\eqref{eq:fcc-structural-consistency}.

On the event $E^*\subseteq\widehat E$, Equation~\eqref{eq:app-composed-error} and the propagation argument in Equation~\eqref{eq:app-composition-rms-chain} give
\begin{equation}
\begin{aligned}
\left(\E_{\Pint}\left[\left\|\widehat F_{\mathrm{causal}}(X_t,a)-F_{\mathrm{causal}}(X_t,a)\right\|_2^2\right]\right)^{1/2}
&=\left(\E_{\Pint}\left[\left\|\sum_{\substack{v\in\mathcal M\\T_{j\leftarrow v}\neq0}}T_{j\leftarrow v}e_v\right\|_2^2\right]\right)^{1/2}\\
&\le\sum_{\substack{v\in\mathcal M\\T_{j\leftarrow v}\neq0}}\left(\E_{\Pint}\left[\left\|T_{j\leftarrow v}e_v\right\|_2^2\right]\right)^{1/2}\\
&\le\sum_{\substack{v\in\mathcal M\\T_{j\leftarrow v}\neq0}}\|T_{j\leftarrow v}\|_2\left(\E_{\Pint}\left[\|e_v\|_2^2\right]\right)^{1/2}\\
&\le\sqrt{\varepsilon}\sum_{\substack{v\in\mathcal M\\T_{j\leftarrow v}\neq0}}\|T_{j\leftarrow v}\|_2.
\end{aligned}
\label{eq:app-structural-propagation}
\end{equation}
Squaring Equation~\eqref{eq:app-structural-propagation} gives
\begin{equation}
\E_{\Pint}\left[\left\|\widehat F_{\mathrm{causal}}(X_t,a)-F_{\mathrm{causal}}(X_t,a)\right\|_2^2\right]\le\varepsilon\left(\sum_{\substack{v\in\mathcal M\\T_{j\leftarrow v}\neq0}}\|T_{j\leftarrow v}\|_2\right)^2.
\label{eq:app-structural-composition}
\end{equation}
Using the interventional conditional-mean orthogonality from Equation~\eqref{eq:app-risk-gap-chain}, Equation~\eqref{eq:app-structural-composition} yields
\begin{equation}
\begin{aligned}
R_{\mathrm{int}}(\widehat F_{\mathrm{causal}};j)-R_{\mathrm{int}}(F_{\mathrm{causal}};j)
&=\E_{\Pint}\left[\left\|\widehat F_{\mathrm{causal}}(X_t,a)-F_{\mathrm{causal}}(X_t,a)\right\|_2^2\right]\\
&\le\varepsilon\left(\sum_{\substack{v\in\mathcal M\\T_{j\leftarrow v}\neq0}}\|T_{j\leftarrow v}\|_2\right)^2.
\end{aligned}
\label{eq:app-structural-risk}
\end{equation}
Therefore, Equation~\eqref{eq:app-structural-risk} holds with probability satisfying
\begin{equation}
\begin{aligned}
\Pr\left(R_{\mathrm{int}}(\widehat F_{\mathrm{causal}};j)-R_{\mathrm{int}}(F_{\mathrm{causal}};j)\le\varepsilon\left(\sum_{\substack{v\in\mathcal M\\T_{j\leftarrow v}\neq0}}\|T_{j\leftarrow v}\|_2\right)^2\right)
&\ge\Pr(E^*\subseteq\widehat E)\\
&\ge1-|E^*|\exp(-N_{\min}q_{\min}r_{\min}).
\end{aligned}
\label{eq:app-structural-risk-prob}
\end{equation}
Taking $N_{\min}\to\infty$ in Equation~\eqref{eq:app-structural-risk-prob} and using Equation~\eqref{eq:app-structural-limit} gives
\begin{equation}
\begin{aligned}
\lim_{N_{\min}\to\infty}\Pr\left(R_{\mathrm{int}}(\widehat F_{\mathrm{causal}};j)-R_{\mathrm{int}}(F_{\mathrm{causal}};j)\le\varepsilon\left(\sum_{\substack{v\in\mathcal M\\T_{j\leftarrow v}\neq0}}\|T_{j\leftarrow v}\|_2\right)^2\right)
&\ge\lim_{N_{\min}\to\infty}\Pr(E^*\subseteq\widehat E)\\
&=1.
\end{aligned}
\label{eq:app-structural-risk-limit}
\end{equation}
Hence, with probability approaching one,
\begin{equation}
\limsup_{N_{\min}\to\infty}\left[R_{\mathrm{int}}(\widehat F_{\mathrm{causal}};j)-R_{\mathrm{int}}(F_{\mathrm{causal}};j)\right]\le\varepsilon\left(\sum_{\substack{v\in\mathcal M\\T_{j\leftarrow v}\neq0}}\|T_{j\leftarrow v}\|_2\right)^2,
\label{eq:app-structural-error-limit}
\end{equation}
which proves Equation~\eqref{eq:fcc-asymptotic-error} and completes the proof.
\end{proof}

\subsection{Proof of Theorem~\ref{thm:body-spectral-stability}}
\label{app:proof-spectral-stability}

\begin{proof}
We first connect repeated causal composition to the global resolvent. By the block structure of $B$, the $(j,k)$ block of $B^\ell$ sums the operator products over all length-$\ell$ directed causal paths from $k$ to $j$. Therefore,
\begin{equation}
\begin{aligned}
[B^\ell]_{jk}
&=\sum_{\pi\in\Pi_\ell(k,j)}\prod_{e\in\pi}B_e,\\
[\mathcal T_L]_{jk}
&=\sum_{\ell=0}^{L}\sum_{\pi\in\Pi_\ell(k,j)}\prod_{e\in\pi}B_e,\\
(I-B)\mathcal T_L
&=(I-B)\sum_{\ell=0}^{L}B^\ell
=I-B^{L+1},\\
\mathcal T_L-(I-B)^{-1}
&=-(I-B)^{-1}B^{L+1}.
\end{aligned}
\label{eq:app-path-resolvent-chain}
\end{equation}
If $\rho(B)<1$, then for every $\eta\in(\rho(B),1)$ there exists $C_\eta<\infty$ such that $\|B^\ell\|_2\le C_\eta\eta^\ell$. Equation~\eqref{eq:app-path-resolvent-chain} consequently gives
\begin{equation}
\begin{aligned}
\|\mathcal T_L-(I-B)^{-1}\|_2
&\le\|(I-B)^{-1}\|_2\|B^{L+1}\|_2\\
&\le C_\eta\|(I-B)^{-1}\|_2\eta^{L+1}
\longrightarrow0,
\end{aligned}
\label{eq:app-resolvent-convergence}
\end{equation}
and hence $\mathcal T_\infty=(I-B)^{-1}$, proving Equation~\eqref{eq:body-resolvent}.

We next characterize the accumulated mechanism error without replacing the causal system by independent pathwise bounds. Using the exact finite-horizon resolvent identity in Equation~\eqref{eq:app-path-resolvent-chain},
\begin{equation}
\begin{aligned}
\E_{\Pint}\left[\|\mathcal T_L\xi\|_2^2\right]
&=\E_{\Pint}\left[\left\|(I-B)^{-1}(I-B^{L+1})\xi\right\|_2^2\right]\\
&\le\|(I-B)^{-1}(I-B^{L+1})\|_2^2
\E_{\Pint}\left[\|\xi\|_2^2\right]\\
&\le\varepsilon\|(I-B)^{-1}(I-B^{L+1})\|_2^2\\
&\le\varepsilon\|(I-B)^{-1}\|_2^2
\left(1+\|B^{L+1}\|_2\right)^2\\
&\le\varepsilon\|(I-B)^{-1}\|_2^2
\left(1+C_\eta\eta^{L+1}\right)^2.
\end{aligned}
\label{eq:app-finite-resolvent-error}
\end{equation}
Taking the limit superior directly in Equation~\eqref{eq:app-finite-resolvent-error},
\begin{equation}
\begin{aligned}
\limsup_{L\to\infty}
\E_{\Pint}\left[\|\mathcal T_L\xi\|_2^2\right]
&\le
\varepsilon\|(I-B)^{-1}\|_2^2
\lim_{L\to\infty}\left(1+C_\eta\eta^{L+1}\right)^2\\
&=\varepsilon\|(I-B)^{-1}\|_2^2,
\end{aligned}
\label{eq:app-resolvent-error}
\end{equation}
This bound preserves the interaction among different causal paths through the resolvent rather than bounding each path separately.

For normal $B$, the amplification factor admits an exact spectral characterization. Writing $B=U\Lambda U^*$ and using unitary invariance,
\begin{equation}
\begin{aligned}
\|(I-B)^{-1}\|_2^2
&=\left\|U(I-\Lambda)^{-1}U^*\right\|_2^2\\
&=\lambda_{\max}\left(
U(I-\Lambda)^{-*}(I-\Lambda)^{-1}U^*
\right)\\
&=\max_{\lambda\in\sigma(B)}
\frac{1}{(1-\overline{\lambda})(1-\lambda)}\\
&=\max_{\lambda\in\sigma(B)}
\frac{1}{|1-\lambda|^2}\\
&=\frac{1}{\min_{\lambda\in\sigma(B)}|1-\lambda|^2}.
\end{aligned}
\label{eq:app-resolvent-spectrum}
\end{equation}
Combining Equations~\eqref{eq:app-resolvent-error} and~\eqref{eq:app-resolvent-spectrum} yields
\begin{equation}
\begin{aligned}
\limsup_{L\to\infty}
\E_{\Pint}\left[\|\mathcal T_L\xi\|_2^2\right]
&\le
\varepsilon\lambda_{\max}\left((I-B)^{-*}(I-B)^{-1}\right)\\
&=
\varepsilon\max_{\lambda\in\sigma(B)}|1-\lambda|^{-2}\\
&=
\frac{\varepsilon}
{\min_{\lambda\in\sigma(B)}|1-\lambda|^2},
\end{aligned}
\label{eq:app-spectral-error}
\end{equation}
The denominator is therefore the spectral distance from the causal propagation operator to the singularity of the resolvent at $1$.

It remains to establish the divergent regime. Let $u_*$ be a unit eigenvector satisfying $Bu_*=\lambda_*u_*$ and suppose that the mechanism error has nonzero second-moment projection onto this mode. For normal $B$, orthogonal spectral decomposition gives
\begin{equation}
\begin{aligned}
\E_{\Pint}\left[\|\mathcal T_L\xi\|_2^2\right]
&\ge
\E_{\Pint}\left[|u_*^*\mathcal T_L\xi|^2\right]\\
&=
\E_{\Pint}\left[
\left|
u_*^*
\left(\sum_{\ell=0}^{L}B^\ell\right)\xi
\right|^2
\right]\\
&=
\left|\sum_{\ell=0}^{L}\lambda_*^\ell\right|^2
\E_{\Pint}\left[|u_*^*\xi|^2\right]\\
&=
\begin{cases}
(L+1)^2\E_{\Pint}\left[|u_*^*\xi|^2\right],
& \lambda_*=1,\\[4pt]
\displaystyle
\left|\frac{1-\lambda_*^{L+1}}{1-\lambda_*}\right|^2
\E_{\Pint}\left[|u_*^*\xi|^2\right],
& |\lambda_*|>1.
\end{cases}
\end{aligned}
\label{eq:app-unstable-mode-chain}
\end{equation}
For $|\lambda_*|>1$, the second branch further satisfies
\begin{equation}
\begin{aligned}
\left|\frac{1-\lambda_*^{L+1}}{1-\lambda_*}\right|^2
\E_{\Pint}\left[|u_*^*\xi|^2\right]
&\ge
\frac{\left(|\lambda_*|^{L+1}-1\right)^2}
{|1-\lambda_*|^2}
\E_{\Pint}\left[|u_*^*\xi|^2\right]\\
&\longrightarrow\infty,
\end{aligned}
\label{eq:app-supercritical-lower-bound}
\end{equation}
while the first branch of Equation~\eqref{eq:app-unstable-mode-chain} diverges quadratically when $\lambda_*=1$. Since $\E_{\Pint}[|u_*^*\xi|^2]>0$, both cases give
\begin{equation}
\limsup_{L\to\infty}
\E_{\Pint}\left[\|\mathcal T_L\xi\|_2^2\right]
=\infty,
\label{eq:app-divergence}
\end{equation}
proving Equation~\eqref{eq:body-critical-divergence}.

Thus $\rho(B)<1$ yields a convergent causal path expansion whose accumulated error is governed by the resolvent norm, while an excited eigenmode at $\lambda_*=1$ or $|\lambda_*|>1$ yields unbounded propagation. This completes the proof.
\end{proof}

\end{document}